\newtheorem{myDef}{Definition}
\newtheorem{myTheo}{Theorem}
\newtheorem{myLem}{Lemma}
\begin{document}
\title{Balanced Multi-Relational Graph Clustering}


\author{Zhixiang Shen}
\orcid{0009-0004-3878-0177}
\affiliation{%
  \institution{University of Electronic Science and Technology of China}
  \city{Chengdu}
  \state{Sichuan}
  \country{China}}
\email{zhixiang.zxs@gmail.com}

\author{Haolan He}
\orcid{0009-0008-4826-4589}
\affiliation{%
  \institution{University of Electronic Science and Technology of China}
  \city{Chengdu}
  \state{Sichuan}
  \country{China}}
\email{HaolanHe7777@gmail.com}

\author{Zhao Kang}
\orcid{0000-0003-4103-0954}
\authornote{Corresponding author.}
\affiliation{%
  \institution{University of Electronic Science and Technology of China}
  \city{Chengdu}
  \state{Sichuan}
  \country{China}}
\email{zkang@uestc.edu.cn}

\renewcommand{\shortauthors}{Zhixiang Shen, Haolan He, \& Zhao Kang}

\begin{abstract}

Multi-relational graph clustering has demonstrated remarkable success in uncovering underlying patterns in complex networks. Representative methods manage to align different views motivated by advances in contrastive learning. Our empirical study finds the pervasive presence of imbalance in real-world graphs, which is in principle contradictory to the motivation of alignment. In this paper, we first propose a novel metric, the Aggregation Class Distance, to empirically quantify structural disparities among different graphs. To address the challenge of view imbalance, we propose Balanced Multi-Relational Graph Clustering (BMGC), comprising unsupervised dominant view mining and dual signals guided representation learning. It dynamically mines the dominant view throughout the training process, synergistically improving clustering performance with representation learning. Theoretical analysis ensures the effectiveness of dominant view mining. Extensive experiments and in-depth analysis on real-world and synthetic datasets showcase that BMGC achieves state-of-the-art performance, underscoring its superiority in addressing the view imbalance inherent in multi-relational graphs. The source code and datasets are available at \href{https://github.com/zxlearningdeep/BMGC}{https://github.com/zxlearningdeep/BMGC}.
\end{abstract}


\begin{CCSXML}
<ccs2012>
   <concept>
       <concept_id>10010147.10010257.10010258.10010260.10003697</concept_id>
       <concept_desc>Computing methodologies~Cluster analysis</concept_desc>
       <concept_significance>500</concept_significance>
       </concept>
   <concept>
       <concept_id>10010147.10010257.10010293.10010294</concept_id>
       <concept_desc>Computing methodologies~Neural networks</concept_desc>
       <concept_significance>300</concept_significance>
       </concept>
   <concept>
       <concept_id>10010147.10010257.10010321.10010337</concept_id>
       <concept_desc>Computing methodologies~Regularization</concept_desc>
       <concept_significance>300</concept_significance>
       </concept>
 </ccs2012>
\end{CCSXML}

\ccsdesc[500]{Computing methodologies~Cluster analysis}
\ccsdesc[300]{Computing methodologies~Neural networks}
\ccsdesc[300]{Computing methodologies~Regularization}

\keywords{Multi-view Graph Clustering, Imbalanced Graph Learning, Graph Representation Learning, Graph Homophily}


\maketitle

\section{Introduction}

Multi-relational graphs, which involve a set of nodes with multiple relations, are prevalent in the real world because of their extraordinary ability in characterizing complex systems \cite{qu2017attention}. Some typical instances are citation networks, social networks, and knowledge graphs \cite{wu2023graph,pan2023high}.
Recently, the unsupervised exploration of the inherent pattern in complex networks has attracted considerable attention, particularly in the context of multiview graph clustering (MVGC). Conventional MVGC techniques typically combine graph optimization with clustering techniques such as subspace clustering and spectral clustering \cite{lin2021multi,pan2021multi}. With the advancement of Graph Neural Networks (GNNs), a new wave of deep MVGC methods has been proposed, such as O2MAC \cite{fan2020one2multi}, DMGI \cite{park2020unsupervised}, HDMI \cite{jing2021hdmi}, BTGF \cite{qian2023upper}. They have demonstrated significant efficacy. 

However, representative MVGC methods often align all views to seek consistent information with the aid of a contrastive learning mechanism \cite{jing2021hdmi,mo2023multiplex,qian2023upper,wang2023heterogeneous}. This approach often ignores the fact that different views in real-world data do not always carry equal significance, i.e., the imbalance phenomenon. Our empirical analysis of real-world multi-relational graphs confirms this intuition. As shown in Fig. \ref{ES_ACC}, different relations exhibit a big gap in classification accuracy. Therefore, naively aligning different views could degrade the final performance.

To this end, we address the view imbalance problem in multi-relational graphs in this work. Unlike other multiview data, the view differences in multi-relational graphs are rooted in their topology structures. Thus, a natural question arises: ($\mathcal{Q}1$) \textbf{How can we quantify the structural disparities between views in multi-relational graphs?} Previous studies in multimodal learning indicate the presence of a dominant view in view-imbalanced data \cite{wang2020makes}. Given the clustering tasks, another question appears: ($\mathcal{Q}2$) \textbf{How can we discover the dominant view without supervision to guide multi-relational graph clustering?}

In addressing $\mathcal{Q}1$, we propose a simple yet effective view evaluation metric: Aggregation Class Distance (ACD). Unlike previous methods that solely calculate graph homophily ratios at the edge or node level \cite{pan2023beyond}, ACD takes into account both the aggregation process and the distribution of node classes. Empirical studies conducted on real-world datasets demonstrate the efficacy of this novel metric in evaluating the quality of views in multi-relational graphs.

For $\mathcal{Q}2$, we propose Balanced Multi-Relational Graph Clustering (BMGC), which incorporates unsupervised dominant view mining and dual signal guided representation learning. A dynamic method of unsupervised exploration of the dominant view is employed throughout the training process, taking advantage of view-specific representations and original node features. Theoretical analysis establishes the connection between this approach and ACD, ensuring the effectiveness of dominant view mining. Afterward, dual signals from the dominant view and node features supervise the graph embedding, promoting co-aligned representation learning. Finally, the dominant assignment is utilized to further enhance the clustering performance. In particular, dynamic extraction of the dominant view coupled with representation learning synergistically boosts model training.

\begin{figure}[t]
	\centering
	\begin{subfigure}{0.49\linewidth}
		\centering
		\includegraphics[width=1.0\linewidth]{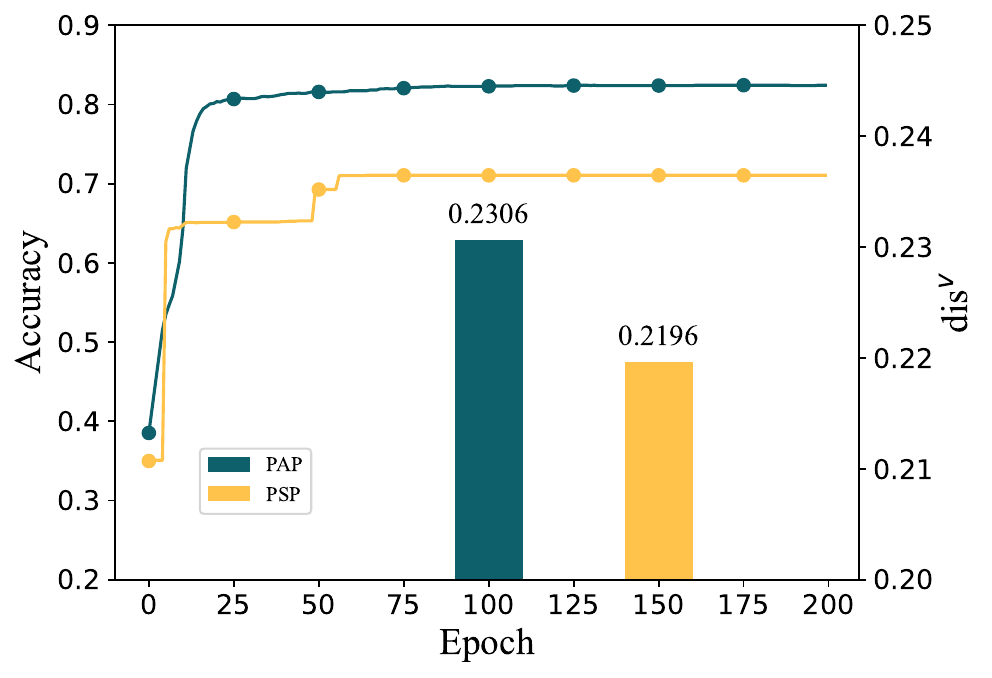}
		\caption{ACM}
		\label{acm_acc}
	\end{subfigure}
	\centering
	\begin{subfigure}{0.49\linewidth}
		\centering
		\includegraphics[width=1.0\linewidth]{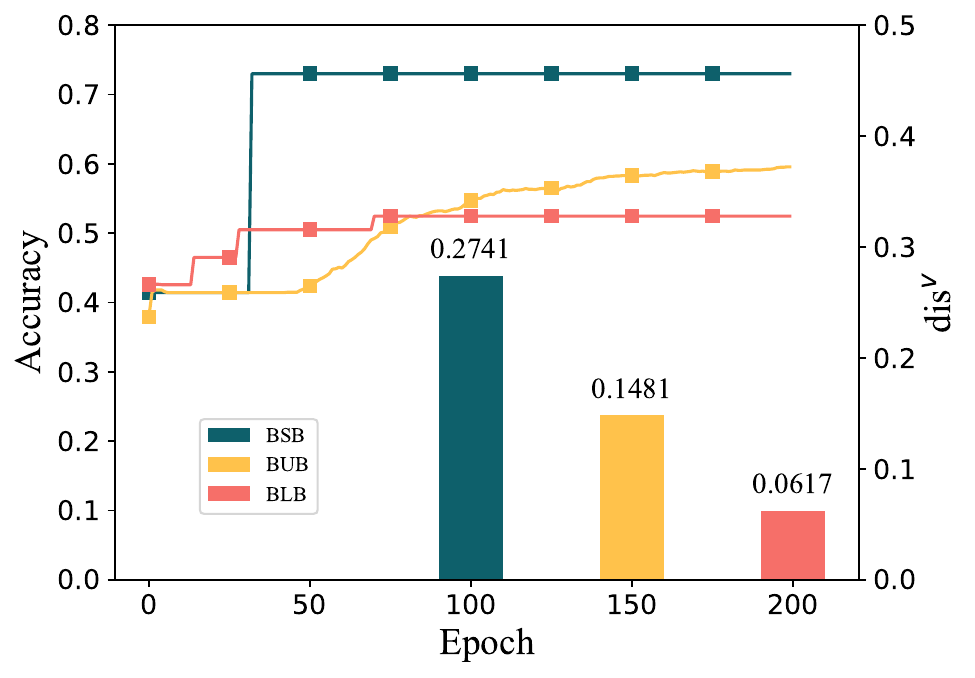}
		\caption{Yelp}
		\label{yelp_acc}
	\end{subfigure}
	\caption{The node classification accuracy for each view of the test set, along with the corresponding ACD. The trends in accuracy and ACD are consistent. }
	\label{ES_ACC}
\end{figure}

We underscore the primary contributions of this study as follows:
\begin{itemize}
    \item We explore view imbalance in multi-relational graphs and design a metric for evaluating view quality. Our empirical study confirms the presence of view disparities and validates the utility of our proposed metric.
    \item To our best knowledge, we are the first to tackle view imbalance in multi-relational graphs without supervision. BMGC integrates an unsupervised method for dominant view mining and dual signal guided representation learning. Furthermore, the dominant assignment is leveraged to facilitate self-training clustering. A theoretical guarantee is provided to demonstrate the efficacy of our approach.
    \item We conduct comprehensive experiments and in-depth analysis on real-world and synthetic datasets. BMGC surpasses existing advanced methods across all datasets, affirming its effectiveness and superiority in addressing view imbalance.
\end{itemize}

\section{Related Work}

\subsection{Multiview Graph Clustering}

Recently, there have been extensive explorations into multiview graph clustering. Typical shallow methods, such as MvAGC \cite{lin2021multi} and MCGC \cite{pan2021multi}, combine graph filtering with self-expression learning to leverage attribute and structural information simultaneously.

With the progress in representation learning, several deep methods have emerged. Most of them start by unsupervised learning of node representations and then apply the k-means algorithm on these representations to obtain clustering results. O2MAC \cite{fan2020one2multi} is the first to employ GNN for MVGC, selecting the most informative view as input and reconstructing the graph structures of all views to capture shared information. Although O2MAC considers discrepancies in different graph structures, its encoding strategy, which retains only the best view, results in degradation into a single-view method. Moreover, it uniformly reconstructs the graph structures of all views, which in turn disregards the view imbalance, further leading to suboptimal results. DMGI \cite{park2020unsupervised} and HDMI \cite{jing2021hdmi} optimize embeddings by maximizing mutual information between local and global representations. MGCCN \cite{LIU2022256}, MGDCR \cite{mo2023multiplex}, and BTGF \cite{qian2023upper} incorporate various contrastive losses to achieve the alignment of the representation and prevent dimension collapse. DuaLGR \cite{ling2023dual} extracts supervised signals from node attributes and graph structures to guide the MVGC. 
CoCoMG \cite{peng2023unsupervised} and DMG \cite{mo2023disentangled} approach multi-view representation learning from the perspectives of consistency and complementarity. Although numerous methods achieve representation learning through multiview alignment, most of them overlook the inherent performance disparities between different views. These alignment-based methods tend to treat all graphs equally, which compromises the quality of node representations and thereby deteriorates the clustering results.

In supervised or semi-supervised tasks, methods like HAN \cite{wang2019heterogeneous} and SSAMN \cite{sadikaj2023semi} consider the varying importance of views. However, they require labeled information for training, which is unsuitable for unsupervised tasks.
\textbf{To our best knowledge, we are the first to address view imbalance in multiview graph clustering}.

\subsection{Imbalanced Multiview Learning}

Numerous efforts have been dedicated to addressing the challenges of imbalanced multiview learning from diverse perspectives. Works such as \cite{han2022trusted,liu2018late} tackle imbalanced views through decision-level fusion. Specifically, they initially cluster each view and then fuse the view-specific clustering results. Another distinct avenue involves leveraging similarity graphs. MDcR \cite{zhang2016flexible} constructs balanced view-specific inter-instance similarity graphs, utilizes embedding techniques to acquire latent representations, and concatenates them to form the final representation for clustering. In contrast, FMUGE \cite{zhang2021flexible} takes a different approach to model order, initially combining view-specific similarity matrix to create a common similarity graph, followed by learning a comprehensive multiview representation. However, all of these methods cannot handle graph structure data.

Imbalanced multimodal learning has also attracted widespread attention. Recent theoretical advancements have demonstrated the potential of multimodal learning to surpass the upper limits of single-modal performance \cite{huang2021makes}. However, due to the varying confidence levels and noise across different modalities, the learning process is susceptible to inducing bias towards a dominant modality. To achieve a balanced multimodal classification, OGM \cite{peng2022balanced} devises a modality-wise difference ratio to monitor the contribution discrepancy of each modality to the target, thus adaptively adjusting the gradients of each modality. Subsequently, PMR \cite{fan2023pmr} proposes Prototypical Modality Rebalance, accelerating the slow-learning modality by enhancing its clustering towards prototypes. Despite the effectiveness of these methods, none have considered graph data. Therefore, methods to handle the view imbalance in the realm of unsupervised multiview graph learning are urgently needed.

\section{Empirical Study}

\textbf{Notation.} In this work, we define a multi-relational graph as $\mathcal{G}=\{\mathcal{V}, \mathcal{E}_1, \cdots,\mathcal{E}_v, \cdots,\mathcal{E}_V, X\}$, where $\mathcal{V}$ is the node set with $N$ nodes and $\mathcal{E}_v$ is the edge set in the $v$-th view. $V>1$ is the number of relational graphs. $X\in \mathbb{R}^{N\times d_f} $ is the feature matrix and $x\in\mathbb{R}^N$ is a column of the feature matrix that represents a graph signal. $\tilde{A}^v$ denotes the original adjacency matrix of the $v$-th view. $D^v$ represents the degree matrix. The normalized adjacency matrix of the $v$-th view is given by $A^v=(D^v)^{-\frac{1}{2}}\tilde{A}^v (D^v)^{-\frac{1}{2}}$. It is a well-known fact that the eigenvalues of $A^v$ in each view are contained within $[-1, 1]$. $\hat{A}^v=(D^v+I)^{-\frac{1}{2}}(\tilde{A}^v+I) (D^v+I)^{-\frac{1}{2}}$ represents the normalized adjacency matrix with a self-loop to each node, where $I$ is an identity matrix. $C$ is the number of node classes, and $y\in \mathbb{R}^{N}$ denotes the label vector.

In a multi-relational graph, the imbalance between views stems from differences in graph structure: some graphs contain more task-relevant information, while others are less task-relevant. Previous research has analyzed the impact of the graph structure on GNN from the perspective of graph homophily, suggesting that structures with high homophily ratios often exhibit superior performance \cite{zhu2021graph,chanpuriya2022simplified}.
Here, the edge-level homophily ratio $(hr)$ is defined as $hr=\frac{1}{\vert\mathcal{E}\vert}\sum_{(i,j)\in\mathcal{E}}\mathds{1}(y_i=y_j)$, where $\mathds{1}(\cdot)$ is the indicator function that equals 1 if its argument is true and 0 otherwise.
However, recent studies have shown that neighbors of different classes may not necessarily make the nodes indistinguishable \cite{luan2022revisiting}. Graph structure analysis should consider node neighborhood patterns and the aggregation process. To quantify structural disparities across different views, we propose a simple yet effective metric: \textit{Aggregation Class Distance }(ACD). 
ACD evaluates structure quality based on aggregated feature distribution of node classes, adapting better to real-world complexity than assuming a direct correlation between homophily ratio and task performance. The theoretical analysis in Section \ref{theroetical} substantiates this assertion. 
We choose the Simple Graph Convolution (SGC) as the aggregation method \cite{wu2019simplifying}, a widely used representative aggregation operation \cite{chanpuriya2022simplified, luan2022revisiting}. The ACD is defined as follows.

\begin{myDef}
The aggregation class distance for the $v$-th view, denoted as $dis^v$, is calculated as:
\begin{equation}
X^v=(A^v)^K X, \quad \bar{X}^v_m = \frac{1}{N_m}\sum_{y_i=m} X^v_i
\end{equation}
\begin{equation}
dis^v=\frac{2}{C^2-C}\sum_{m=1}^C\sum_{n=m+1}^C \Vert\bar{X}^v_m- \bar{X}^v_n\Vert
\end{equation}
where $N_m$ is the number of nodes in class $m$ and $K$ denotes the radius of aggregation. The reciprocal of $\frac{2}{C^2-C}$ represents the computation count for pairwise inter-class distances.
\end{myDef}

$\bar{X}^v_m$ represents the centroid of aggregated features for nodes with class $m$ in the $v$-th view. The metric $dis^v$ gauges the inter-class distance of aggregated features. A higher value indicates better discriminability among different classes.

To demonstrate the connection between ACD and view performance, we conduct empirical research on real-world datasets. We randomly select 30\% of the nodes as the training set, leaving the remaining ones for the test set. The aggregation radius is set to 3, aligning with the common layer count of many GNN models \cite{baranwal2022effects}. A linear layer serves as the classifier. As shown in Fig. \ref{ES_ACC}, the line represents the classification accuracy of each view, while the bar chart indicates the corresponding ACD. Different views yield different results, affirming the existence of view imbalance. In each dataset, the performance of one view significantly exceeds that of others, and we refer to it as the dominant view. Furthermore, views with higher ACD values exhibit better classification results, underscoring the efficacy of ACD in gauging structural disparities between views. This empirical evidence supports the notion that ACD serves as a valuable metric for evaluating the quality of views in multi-relational graphs. It is worth noting that ACD uses node labels to assess the graph structure quality of each view, meaning that it is a supervised approach and cannot be directly applied to unsupervised learning tasks like clustering.

\section{Methodology}

In this section, we propose Balanced Multi-Relational Graph Clustering, as depicted in Fig. \ref{Framework}, to overcome inherent view imbalance.

\begin{figure*}[t]
	\centering
		\includegraphics[width=.95\linewidth]{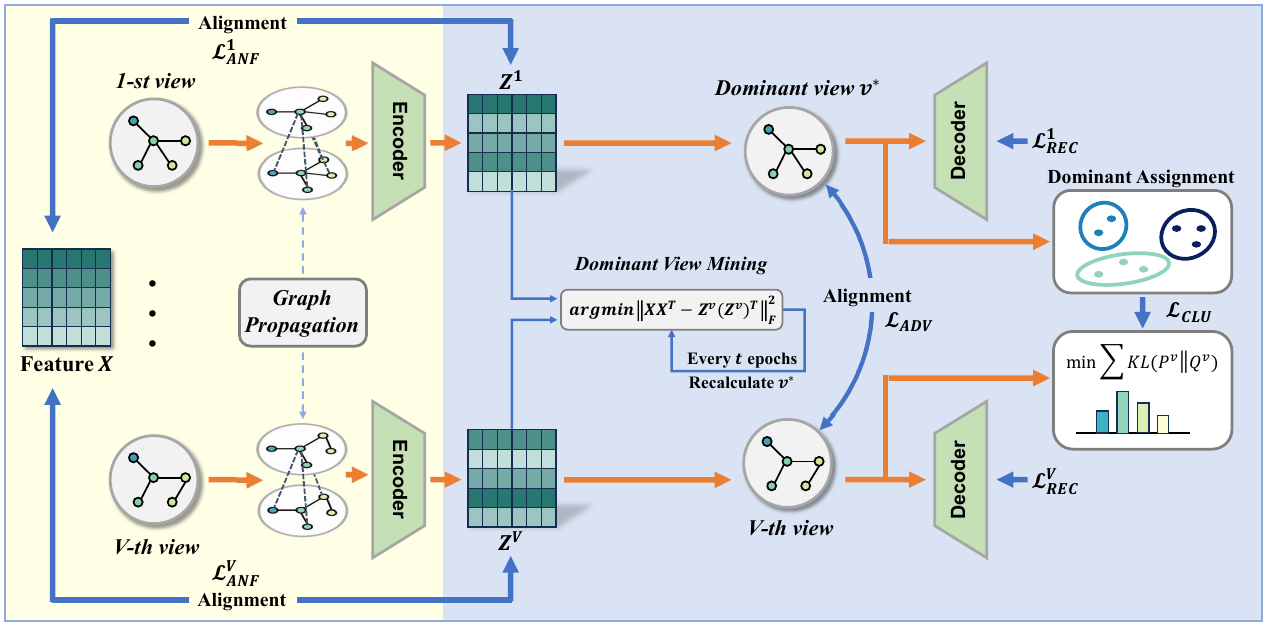}
	\caption{Illustration of our proposed framework BMGC. Firstly, node representations for each view are obtained through scalable graph encoding. Then, unsupervised dominant view mining and dual signals guided representation learning synergistically facilitate model training. Finally, the dominant assignment further enhances clustering quality.}
	\label{Framework}
\end{figure*}

\subsection{Scalable Graph Encoding}\label{encoding}

Unlike most GNN-based approaches \cite{mo2023multiplex, mo2023disentangled}, we decouple graph propagation and dimensionality reduction to improve scalability. Initially, we perform propagation on the node features separately for each view, acquiring view-specific aggregated features. Similarly to the approximate personalized propagation in \cite{chien2020adaptive}, we introduce the features of the original node as a teleport vector in each layer of the propagation process:

\begin{equation}
X^{v,0}=X, \quad X^{v,k+1}=(1-\alpha)\hat{A}^v X^{v,k} + \alpha X
\end{equation}
where $X$ acts as both the starting matrix and the teleport set for each view. The hyper-parameter $\alpha \in [0, 1]$ represents the teleport probability. $k\in[0,K-1]$ and the aggregated features $X^v=X^{v,K}$. These features are then fed into a shared encoder for dimensionality reduction:

\begin{equation}
Z^v=f_{\Theta}(X^v)
\end{equation}
where $Z^v\in \mathbb{R}^{N\times d_r}$ denotes node representations in the $v$-th view. This decoupled setup avoids the time-consuming graph convolution operations during training. Subsequently, the representations for each view are fed into a shared decoder for the reconstruction of view-specific aggregated features. Effective training of each view is ensured by optimizing the reconstruction cosine error:

\begin{equation}
\hat{X}^v=g_{\Theta}(Z^v)
\end{equation}
\begin{equation}
\mathcal{L}_{REC} =  \frac{1}{N V}\sum\limits_{v=1}^V \sum\limits_{i=1}^{N}\left(1 - \frac{(X_i^v)^{\top} \hat{X}_{i}^v}{\Vert X_{i}^v\Vert\cdot \Vert \hat{X}_{i}^v\Vert} \right)
\end{equation}
where the encoder $f_{\Theta}(\cdot)$ and the decoder $g_{\Theta}(\cdot)$ are both implemented using a Multilayer Perceptron (MLP) in this study.

\subsection{Unsupervised Dominant View Mining}\label{udvm}

Due to the absence of label information, we cannot directly utilize ACD for the assessment of view quality in multi-relational graph clustering.
Therefore, grounded in the principle of invariance in the distribution of similarity between instances, we propose an unsupervised method for dominant view mining:

\begin{equation}\label{dvm}
v^*=\mathop{\mathrm{\arg \min}}_{v} \Vert XX^{\top} - X^v(X^v)^{\top}\Vert^2_F
\end{equation}
where $v^*$ denotes the dominant view. The above expression quantifies the discrepancy between the similarity matrices of original features and view-specific aggregated features, henceforth referred to as the ``unsupervised metric''. Therefore, the dominant view should optimally maintain instance similarities. In Section \ref{theroetical}, we theoretically establish the effectiveness of our approach. 

Considering potential noise in real-world data, we refrain from directly using aggregated features to gauge view quality and, instead, rely on node representations:

\begin{equation}\label{dvm-}
v^*=\mathop{\mathrm{\arg \min}}_{v} \Vert XX^{\top} - Z^v(Z^v)^{\top}\Vert^2_F
\end{equation}
In the training process, we initialize the dominant view using Equation (\ref{dvm}) and recalculate it every $t$ epochs using Equation (\ref{dvm-}). As training progresses, the quality of node representations improves, thereby bolstering the reliability of dominant view mining. Simultaneously, the dominant view would guide representation learning, as elaborated later. It constitutes a mutually reinforcing process.

\subsection{Co-aligned Representation Learning}

After determining the dominant view, we use it to improve the representation quality. We employ contrastive learning to align the representations of other views with the dominant view. The representations of each view are projected to a shared latent space using separate learnable MLPs for fair similarity measurement and loss calculation. The contrastive loss is defined as follows:

\begin{equation}
\ell(Z^v_i, Z^{v^*}_i)=-\log\frac{e^{sim(\tilde{Z}^v_i,\ \tilde{Z}^{v^*}_i)  / \tau}}{\sum_{j=1}^N e^{sim(\tilde{Z}^v_i,\ \tilde{Z}^{v^*}_j)  / \tau}}
\end{equation}
where $\tilde{Z}^v_i$ is the non-linear projection of $Z^v_i$. $sim(\cdot)$ refers to the cosine similarity and $\tau$ is the temperature parameter. The loss for aligning with the dominant view is given by:

\begin{equation}
\mathcal{L}_{ADV}=\frac{1}{2N(V-1)} \sum\limits_{\substack{v=1 \\ v \neq v^*}}^V \sum_{i=1}^N (\ell(Z^v_i, Z^{v^*}_i)+\ell(Z^{v^*}_i, Z^v_i))
\end{equation}

Each view, along with the supervision from the dominant view, should also preserve a consistent similarity distribution among the nodes. Hence, we introduce a loss to ensure alignment with the node features:

\begin{equation}
\mathcal{L}_{ANF}=\frac{1}{N^2 V}\sum_{v=1}^V\Vert XX^{\top} - Z^v(Z^v)^{\top}\Vert^2_F
\end{equation}
Note that the loss $\mathcal{L}_{ANF}$ shares a similar form with the unsupervised metric proposed in Section \ref{udvm}. This establishes a foundation for ensuring the reliability of our approach in continuously mining the dominant view during training.
Ultimately, guided by dual signals from both the dominant view and node features, we accomplish the co-aligned representation learning:

\begin{equation}
\mathcal{L}_{CAL}=\mathcal{L}_{ADV}+\mathcal{L}_{ANF}
\end{equation}

\subsection{Dominant Assignment Enhanced Clustering}

Most deep clustering methods leverage target distribution and soft cluster assignment probability distributions to achieve a self-training clustering scheme, with the cluster distribution typically obtained by applying k-means \cite{tu2021deep,liu2022deep,qian2023upper}. To improve the clustering performance, we substitute representation distributions in other views with cluster assignments derived from the dominant view. Specifically, we apply k-means to the representations of the dominant view to obtain the dominant assignment:

\begin{equation}
\hat{y}=KMeans(\{Z^{v^*}_i:i=1,\cdots,N\})
\end{equation}
Then, the soft assignment distribution $Q^v$ in the $v$-th view can be formulated as:

\begin{equation}
\sigma^v_j = \frac{1}{N_j}\sum_{\hat{y}_i=j} Z^v_i, \quad q_{ij}^v=\frac{(1+\Vert Z^v_i-\sigma^v_j\Vert^2)^{-1}}{\sum_{k=1}^C(1+\Vert Z^v_i-\sigma^v_k\Vert^2)^{-1}}
\end{equation}
where $N_j$ denotes the number of nodes with the cluster assignment $j$ and $q_{ij}^v$ is measured using Student’s t-distribution to denote the similarity between representation $Z^v_i$ and the clustering center $\sigma^v_j$. The target distribution $P^v$ is computed as:

\begin{equation}
p^v_{ij}=\frac{	\left(q^v_{ij}\right)^2/\ \sum_{i=1}^N q_{ij}^v}{\sum_{k=1}^C\left(\left(q^v_{ik}\right)^2/\ \sum_{i=1}^N q_{ik}^v\right)}
\end{equation}
We minimize the KL divergence between the distributions $Q^v$ and $P^v$ for each view to enhance cluster cohesion. The final node representation $Z=[Z^1,\cdots, Z^V]\in \mathbb{R}^{N\times Vd_r}$ is obtained by concatenating representations from all views.  We simultaneously minimize the KL divergence between the $Q$ and $P$ distributions of the final representation $Z$:

\begin{equation}
\mathcal{L}_{CLU}=KL(P\Vert Q)+\frac{1}{V}\sum_{v=1}^V KL(P^v\Vert Q^v)
\end{equation}

The overall objective of BMGC, which we aim to minimize through the gradient descent algorithm, consists of three loss terms:

\begin{equation}
\mathcal{L} = \mathcal{L}_{REC}+\mathcal{L}_{CAL}+\mathcal{L}_{CLU}
\end{equation}

For large-scale datasets, our method, which benefits from scalable graph encoding, eliminates the need for neighbor sampling during the training process. Consequently, we can directly perform mini-batch training, where all loss terms are computed solely from nodes within the batch.

\section{Theoretical Analysis}\label{theroetical}

In this section, we use a synthetic network to theoretically demonstrate the efficacy of BMGC in extracting the dominant view. For simplicity, we adopt SGC as the feature aggregation method.

\textbf{\textit{Data Assumption.}} A multi-relational graph $G$ has $N$ nodes partitioned into 2 equally sized communities $C_1$ and $C_2$. Let $c_1, c_2\in\{0,1\}^N$ be indicator vectors for membership in each community, that is, the $j^{th}$ entry of $c_i$ is $1$ if the $j^{th}$ node is in $C_i$ and 0 otherwise. $G$ has $V$ views, each is generated by SBM \cite{abbe2018community}, with intra- and inter- community edge probabilities $p^v$ and $q^v$. $G$ is such a graph model with a feature matrix $X=F+H\in\mathbb{R}^{N\times d_f}$, where each column of $H$ follows a zero-centered, isotropic Gaussian noise distribution $\mathcal{N}(0,\sigma^2I)$ and these columns are mutually independent. The matrix $F$ is defined as $F=c_1\mu_1^{\top}+c_2\mu_2^{\top}$, where $\mu_1,\mu_2\in\mathbb{R}^N$ has the same Euclidean norm $\Vert\mu\Vert$, representing the expected characteristic vector of each community. In addition, let $\bar{\mu}=\frac{1}{2}(\mu_1+\mu_2)$ be the average of the feature vector means.

\begin{myLem}\label{lem_x}
Let $X^v$ be the aggregated feature matrix of the $v$-th view by applying SGC, with the number of hops $K$, to the expected adjacency matrix $\tilde{A}^v$ and the feature matrix $X$. Then, $X^v=F^v+c_1(\theta_1^v)^\top+c_2(\theta_2^v)^\top$, where $F^v=(\lambda_2^v)^KF+(1-(\lambda_2^v)^K)(\mathbf{1}\bar{\mu}^\top)$, $\theta_1^v$ and $\theta_2^v\in \mathbb{R}^{d_f}$ are both distributed according to $\mathcal{N}(0,\frac{1}{N}(1+(\lambda_2^v)^{2K})\sigma^2I)$, and $\lambda_2^v=\frac{p^v-q^v}{p^v+q^v}\in[-1,1]$ is the second largest non-zero eigenvalue of the associated normalized adjacency matrix $A^v$.
\end{myLem}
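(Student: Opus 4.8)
The plan is to diagonalize the normalized expected adjacency and then push the feature matrix through $(A^v)^K$ by linearity, handling the signal part $F$ and the noise part $H$ separately. First I would observe that on the expected graph every node has the same degree $d=\frac{N}{2}(p^v+q^v)$, so $D^v=dI$ and $A^v=\frac1d\tilde A^v=\frac1d\big[p^v(c_1c_1^\top+c_2c_2^\top)+q^v(c_1c_2^\top+c_2c_1^\top)\big]$. A direct computation then shows that $\mathbf{1}=c_1+c_2$ is an eigenvector with eigenvalue $1$ and that $c_1-c_2$ is an eigenvector with eigenvalue $\lambda_2^v=\frac{p^v-q^v}{p^v+q^v}$, while everything orthogonal to these two vectors lies in the kernel; in particular $A^v$ has rank two. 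Writing $c_1=\frac12(\mathbf{1}+(c_1-c_2))$ and $c_2=\frac12(\mathbf{1}-(c_1-c_2))$ expresses the community indicators in this eigenbasis, which is the key reduction for both halves of the argument.

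For the signal part I would apply $(A^v)^K$ to each indicator: since the two nonzero eigenvalues are $1$ and $\lambda_2^v$, I get $(A^v)^Kc_1=\frac12\big(\mathbf{1}+(\lambda_2^v)^K(c_1-c_2)\big)$ and $(A^v)^Kc_2=\frac12\big(\mathbf{1}-(\lambda_2^v)^K(c_1-c_2)\big)$. Substituting into $F=c_1\mu_1^\top+c_2\mu_2^\top$ and regrouping with $\bar\mu=\frac12(\mu_1+\mu_2)$ collapses the $\mathbf{1}$-components into $\mathbf{1}\bar\mu^\top$ and leaves a $(\lambda_2^v)^K$-scaled contrast term. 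Comparing with the identity $F=\mathbf{1}\bar\mu^\top+\frac12(c_1-c_2)(\mu_1-\mu_2)^\top$ then shows directly that $(A^v)^KF=(\lambda_2^v)^KF+(1-(\lambda_2^v)^K)\mathbf{1}\bar\mu^\top=F^v$. This part is essentially bookkeeping once the eigenbasis is in hand.

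The substantive step is the noise part $(A^v)^KH$. Because $A^v$ is the rank-two operator above, $(A^v)^K$ acts on any column $h\sim\mathcal N(0,\sigma^2 I)$ as a scaled projector onto $\mathrm{span}\{\mathbf{1},c_1-c_2\}$: using the normalized eigenvectors $\frac{1}{\sqrt N}\mathbf{1}$ and $\frac{1}{\sqrt N}(c_1-c_2)$ (each of squared norm $N$), I would write $(A^v)^Kh=a\,\mathbf{1}+b\,(c_1-c_2)$ with $a=\frac1N\langle\mathbf{1},h\rangle$ and $b=\frac{(\lambda_2^v)^K}{N}\langle c_1-c_2,h\rangle$. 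Regrouping via $\mathbf{1}=c_1+c_2$ gives $(A^v)^Kh=(a+b)c_1+(a-b)c_2$, so stacking the $d_f$ columns yields exactly the claimed form $c_1(\theta_1^v)^\top+c_2(\theta_2^v)^\top$, where the $j$-th entries of $\theta_1^v$ and $\theta_2^v$ are the $a+b$ and $a-b$ built from the $j$-th column of $H$.

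It remains to identify the Gaussian laws, which I expect to be the main obstacle to get clean. Since $h$ is Gaussian, $a$ and $b$ are jointly Gaussian with $\mathrm{Var}(a)=\frac{\sigma^2}{N}$ and $\mathrm{Var}(b)=\frac{(\lambda_2^v)^{2K}\sigma^2}{N}$, and crucially $\mathrm{Cov}(a,b)\propto\mathbf{1}^\top(c_1-c_2)=0$ because the two communities have equal size; hence $a$ and $b$ are independent. Therefore $\mathrm{Var}(a\pm b)=\frac{\sigma^2}{N}\big(1+(\lambda_2^v)^{2K}\big)$, which gives the stated variance for each entry of $\theta_1^v$ and $\theta_2^v$, and independence across the columns of $H$ upgrades this to $\theta_1^v,\theta_2^v\sim\mathcal N\!\big(0,\frac1N(1+(\lambda_2^v)^{2K})\sigma^2 I\big)$. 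Combining $(A^v)^KF=F^v$ with this noise decomposition yields $X^v=F^v+c_1(\theta_1^v)^\top+c_2(\theta_2^v)^\top$, as claimed. The one point I would take care with is keeping the symmetric and antisymmetric coefficients straight and justifying the vanishing covariance, since it is exactly the equal-size cancellation $\mathbf{1}^\top(c_1-c_2)=0$ that decouples the noise of the two communities.
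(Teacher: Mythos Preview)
Your proposal is correct and follows essentially the same approach as the paper: both proofs exploit the rank-two eigendecomposition of the expected normalized adjacency matrix with eigenvectors $\frac{1}{\sqrt N}\mathbf{1}$ and $\frac{1}{\sqrt N}(c_1-c_2)$, push the signal and noise through $(A^v)^K$ separately, and identify the Gaussian law of the resulting noise coefficients. The only cosmetic difference is that the paper invokes rotational invariance of isotropic Gaussians to declare $h_1=\mathbf{q}_1^\top h$ and $h_2=\mathbf{q}_2^\top h$ independent $\mathcal N(0,\sigma^2)$ and then forms $\theta_i=\frac{1}{\sqrt N}(h_1\pm\lambda_2^K h_2)$, whereas you compute $\mathrm{Var}(a)$, $\mathrm{Var}(b)$, and $\mathrm{Cov}(a,b)=0$ directly---these are the same computation in slightly different packaging.
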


\begin{myTheo}\label{XX}
Let $\bar{X}^v_1$ and $\bar{X}^v_2$ denote the centroid of aggregated features for each community in the $v$-th view. Then, $\mathbb{E}\left[\bar{X}^v_1-\bar{X}^v_2\right]=(\lambda_2^v)^K(\mu_1-\mu_2)$ and $\mathbb{E}\left[XX^\top-X^v(X^v)^\top\right]=\frac{1-(\lambda_2^v)^{2K}}{2}(\Vert\mu\Vert^2-\mu_1^\top\mu_2)\\(c_1c_1^\top+c_2c_2^\top-c_1c_2^\top-c_2c_1^\top)+\omega(\sigma^2)$, where $\omega(\sigma^2)$ represents the sum of terms containing $\sigma^2$ that are of negligible magnitude.
\end{myTheo}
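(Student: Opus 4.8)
The plan is to derive both identities directly from the structural decomposition $X^v=F^v+c_1(\theta_1^v)^\top+c_2(\theta_2^v)^\top$ supplied by Lemma~\ref{lem_x}, reducing everything to elementary second-moment computations. For the first identity I would use that the communities are equal in size, so $N_1=N_2=N/2$, and that the $i$-th row of $X^v$ equals $F^v_i+(\theta_1^v)^\top$ for $i\in C_1$ and $F^v_i+(\theta_2^v)^\top$ for $i\in C_2$. Since $F=c_1\mu_1^\top+c_2\mu_2^\top$ and $\mathbf{1}=c_1+c_2$, averaging $F^v=(\lambda_2^v)^KF+(1-(\lambda_2^v)^K)\mathbf{1}\bar\mu^\top$ over each community gives the centroid $(\lambda_2^v)^K\mu_m^\top+(1-(\lambda_2^v)^K)\bar\mu^\top+(\theta_m^v)^\top$. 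Subtracting the two centroids and using $\mathbb{E}[\theta_1^v]=\mathbb{E}[\theta_2^v]=0$ cancels the common $\bar\mu$ term and the zero-mean noise, leaving $(\lambda_2^v)^K(\mu_1-\mu_2)$.

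For the second identity I would split each Gram matrix into a deterministic mean part and a zero-mean fluctuation. Writing $X=F+H$, the cross terms vanish in expectation because $\mathbb{E}[H]=0$, so $\mathbb{E}[XX^\top]=FF^\top+\mathbb{E}[HH^\top]$; as the columns of $H$ are independent $\mathcal{N}(0,\sigma^2 I)$, an entrywise computation gives $\mathbb{E}[HH^\top]=d_f\sigma^2 I$. Likewise, with $G^v=c_1(\theta_1^v)^\top+c_2(\theta_2^v)^\top$ the cross terms drop and $\mathbb{E}[X^v(X^v)^\top]=F^v(F^v)^\top+\mathbb{E}[G^v(G^v)^\top]$. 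Both $d_f\sigma^2 I$ and $\mathbb{E}[G^v(G^v)^\top]$ are absorbed into $\omega(\sigma^2)$; the latter is governed by $\mathbb{E}\Vert\theta_m^v\Vert^2=\tfrac{d_f}{N}(1+(\lambda_2^v)^{2K})\sigma^2$ and the cross moment $\mathbb{E}[(\theta_1^v)^\top\theta_2^v]$, all of order $\sigma^2/N$.

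The crux is evaluating the deterministic difference $FF^\top-F^v(F^v)^\top$. Here I would rewrite $F^v$ in the same block form as $F$: substituting $\bar\mu=\tfrac12(\mu_1+\mu_2)$ yields $F^v=c_1\nu_1^\top+c_2\nu_2^\top$ with $\nu_1=a\mu_1+b\mu_2$ and, by symmetry, $\nu_2=b\mu_1+a\mu_2$, where $a=\tfrac{1+(\lambda_2^v)^K}{2}$ and $b=\tfrac{1-(\lambda_2^v)^K}{2}$. For any matrix of the form $c_1u^\top+c_2w^\top$, the Gram matrix is $\Vert u\Vert^2 c_1c_1^\top+\Vert w\Vert^2 c_2c_2^\top+(u^\top w)(c_1c_2^\top+c_2c_1^\top)$, so both $FF^\top$ and $F^v(F^v)^\top$ reduce to linear combinations of the four outer products $c_ic_j^\top$. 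Exploiting $\Vert\mu_1\Vert=\Vert\mu_2\Vert=\Vert\mu\Vert$ together with the identities $a^2+b^2=\tfrac{1+(\lambda_2^v)^{2K}}{2}$ and $2ab=\tfrac{1-(\lambda_2^v)^{2K}}{2}$, the diagonal-block coefficient works out to $\tfrac{1-(\lambda_2^v)^{2K}}{2}(\Vert\mu\Vert^2-\mu_1^\top\mu_2)$ and the off-diagonal coefficient to its negative, assembling exactly into $\tfrac{1-(\lambda_2^v)^{2K}}{2}(\Vert\mu\Vert^2-\mu_1^\top\mu_2)(c_1c_1^\top+c_2c_2^\top-c_1c_2^\top-c_2c_1^\top)$.

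The main obstacle is careful bookkeeping rather than any sharp estimate: I must keep the row/column conventions consistent and verify that the interference terms $c_1c_2^\top$ arising in $FF^\top$ and in $F^v(F^v)^\top$ subtract to precisely the negated coefficient, which is what yields the clean Laplacian-like structure. A secondary point to state carefully is the sense in which $\omega(\sigma^2)$ is negligible: the view-dependent fluctuation $\mathbb{E}[G^v(G^v)^\top]$ is $O(\sigma^2/N)$, while the residual $d_f\sigma^2 I$ is identical across views, so neither perturbs the relative comparison underlying dominant view mining in the large-$N$ regime.
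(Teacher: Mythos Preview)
Your proposal is correct and takes essentially the same route as the paper: both use Lemma~\ref{lem_x} to split $X$ and $X^v$ into deterministic plus zero-mean parts, absorb the $\sigma^2$ contributions into $\omega(\sigma^2)$, and reduce the problem to evaluating $FF^\top-F^v(F^v)^\top$ in the $c_ic_j^\top$ basis. Your $a,b$ reparametrization $F^v=c_1\nu_1^\top+c_2\nu_2^\top$ organizes that computation a bit more symmetrically than the paper's direct expansion of $(\lambda_2^KF+(1-\lambda_2^K)\mathbf{1}\bar\mu^\top)(\cdots)^\top$, but the argument is otherwise identical.
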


When negligible terms are ignored, it becomes clear that the unsupervised identification of the dominant view essentially corresponds to the view with the maximum $(\lambda_2^v)^2$. Additionally, under this data assumption, $dis^v=\Vert\bar{X}^v_1-\bar{X}^v_2\Vert$ indicates that a view with a larger $(\lambda_2^v)^2$ is more likely to exhibit a larger ACD. Specifically, the consistent changes in both $dis^v$ and $\Vert XX^{\top} - X^v(X^v)^{\top}\Vert^2_F$  related to $(\lambda_2^v)^2$ reveal that the identified dominant view is the one with the largest ACD value.

The theoretical findings are interpretable. $\lambda_2^v$ is influenced by the ratio of intra- to inter-community edge probabilities, essentially measuring the homophily level of the graph structure in the $v$-th view. It tends toward $1$ for complete homophily and $-1$ for complete heterophily. In both cases where $(\lambda_2^v)^2$ approaches 1, there would be no confusion between nodes from different communities. For example, when $\lambda_2^v=-1$, if $K$ is odd, feature exchange occurs between the two communities after aggregation; if $K$ is even, the features would remain unchanged. In such cases with pure graph structures, the view consistently demonstrates a larger ACD and a smaller unsupervised metric, while the homophily ratio would tend toward zero as $\lambda_2^v$ approaches $-1$. On the contrary, when $\lambda_2^v$ nears $0$, the graph structure becomes uninformative, resulting in the view with a smaller ACD and a larger unsupervised metric. In summary, we draw two conclusions: \textbf{1. ACD is more universally applicable than the traditional homophily ratio in assessing the relevance between graph structures and downstream tasks. 2. The dominant view, mined through our unsupervised method, essentially corresponds to the view with the maximum ACD value, which ensures the effectiveness of unsupervised dominant view mining.}

\begin{table*}[t]
  \centering
     \caption{Clustering results on real-world datasets. The best and second-place results are highlighted using bold and underline, respectively. The asterisk (*) denotes the supervised baseline.}
\resizebox{\linewidth}{!}
{
    \begin{tabular}{c|c|c|cc|ccccccccc|c}
    \toprule
    \multirow{2}[2]{*}{Datasets} & \multirow{2}[2]{*}{Metric} & HAN*  & VGAE  & DGI   & O2MAC & DMGI  & MvAGC & HDMI  & MCGC  & MGDCR & DuaLGR & DMG   & BTGF  & \multirow{2}[2]{*}{\textbf{BMGC}} \\
          &       & 2019  & 2016  & 2019  & 2020  & 2020  & 2021  & 2021  & 2021  & 2023  & 2023  & 2023  & 2024  &  \\
    \midrule
    \multirow{4}[2]{*}{Amazon} & NMI   & \underline{0.4037} & 0.0163 & 0.0532 & 0.1344 & 0.2623 & 0.2322 & 0.3702 & 0.2149 & 0.0318 & 0.2767 & 0.1218 & 0.3853 & \textbf{0.5768} \\
          & ARI   & \underline{0.4241} & 0.0129 & 0.0202 & 0.0898 & 0.2605 & 0.1141 & 0.2735 & 0.1056 & 0.0055 & 0.2715 & 0.0283 & 0.2829 & \textbf{0.5626} \\
          & ACC   & \underline{0.7437} & 0.3194 & 0.3762 & 0.4428 & 0.5581 & 0.5188 & 0.5251 & 0.4683 & 0.3489 & 0.6123 & 0.3887 & 0.6603 & \textbf{0.7856} \\
          & F1    & \underline{0.7433} & 0.2725 & 0.2859 & 0.4424 & 0.5463 & 0.5072 & 0.5448 & 0.4804 & 0.2039 & 0.6215 & 0.3441 & 0.6612 & \textbf{0.7851} \\
    \midrule
    \multirow{4}[2]{*}{ACM} & NMI   & 0.6864 & 0.491 & 0.6364 & 0.6923 & 0.6441 & 0.6735 & 0.645 & 0.7126 & 0.721 & 0.7328 & 0.7561 & \underline{0.758} & \textbf{0.7841} \\
          & ARI   & 0.7489 & 0.5448 & 0.6822 & 0.7394 & 0.6729 & 0.7212 & 0.674 & 0.7627 & 0.6496 & 0.7942 & 0.8033 & \underline{0.8085} & \textbf{0.8329} \\
          & ACC   & 0.9088 & 0.8228 & 0.8816 & 0.9042 & 0.8724 & 0.8975 & 0.874 & 0.9147 & 0.919 & 0.9271 & 0.9302 & \underline{0.9322} & \textbf{0.9413} \\
          & F1    & 0.9085 & 0.8231 & 0.8829 & 0.9053 & 0.8709 & 0.8986 & 0.872 & 0.9155 & 0.8678 & 0.927 & 0.9306 & \underline{0.9331} & \textbf{0.9416} \\
    \midrule
    \multirow{4}[2]{*}{DBLP} & NMI   & 0.6998 & 0.6934 & 0.6168 & 0.7294 & 0.7489 & 0.7723 & 0.6361 & 0.6561 & 0.7595 & 0.7559 & \underline{0.7907} & 0.6027 & \textbf{0.8013} \\
          & ARI   & 0.7641 & 0.7413 & 0.5653 & 0.7783 & 0.8032 & 0.828 & 0.6145 & 0.7088 & 0.8072 & 0.8168 & \underline{0.8384} & 0.6534 & \textbf{0.8539} \\
          & ACC   & 0.9015 & 0.8868 & 0.7446 & 0.9071 & 0.9159 & 0.9284 & 0.7832 & 0.8752 & 0.9182 & 0.9242 & \underline{0.9344} & 0.8509 & \textbf{0.9401} \\
          & F1    & 0.8966 & 0.8748 & 0.7392 & 0.901 & 0.9075 & 0.9231 & 0.7372 & 0.8186 & 0.9123 & 0.918 & \underline{0.9303} & 0.8456 & \textbf{0.9364} \\
    \midrule
    \multirow{4}[2]{*}{ACM2} & NMI   & 0.6435 & 0.4507 & 0.5779 & 0.4223 & 0.574 & 0.1819 & 0.5902 & 0.5307 & 0.5447 & 0.5988 & 0.6341 & \underline{0.6483} & \textbf{0.7285} \\
          & ARI   & \underline{0.6979} & 0.4347 & 0.5174 & 0.4451 & 0.5243 & 0.1879 & 0.5472 & 0.4396 & 0.4372 & 0.6399 & 0.6726 & 0.6776 & \textbf{0.7601} \\
          & ACC   & \underline{0.8943} & 0.7358 & 0.8114 & 0.7537 & 0.8148 & 0.5949 & 0.8258 & 0.7129 & 0.6838 & 0.8676 & 0.8796 & 0.8853 & \textbf{0.9185} \\
          & F1    & \underline{0.8955} & 0.7101 & 0.8261 & 0.7418 & 0.8267 & 0.4484 & 0.8386 & 0.5809 & 0.5854 & 0.8653 & 0.8773 & 0.8887 & \textbf{0.9215} \\
    \midrule
    \multirow{4}[2]{*}{Yelp} & NMI   & \underline{0.6762} & 0.3919 & 0.3942 & 0.3902 & 0.3729 & 0.2439 & 0.3912 & 0.3835 & 0.4423 & 0.6621 & 0.391 & 0.4135 & \textbf{0.7173} \\
          & ARI   & \underline{0.7205} & 0.4257 & 0.4262 & 0.4253 & 0.3418 & 0.2925 & 0.3922 & 0.3517 & 0.4647 & 0.6847 & 0.4261 & 0.3564 & \textbf{0.7381} \\
          & ACC   & \underline{0.9082} & 0.6507 & 0.6529 & 0.6507 & 0.5893 & 0.6314 & 0.6452 & 0.6561 & 0.7271 & 0.8948 & 0.6512 & 0.7192 & \textbf{0.9151} \\
          & F1    & \underline{0.9163} & 0.5674 & 0.5679 & 0.5674 & 0.4878 & 0.567 & 0.5874 & 0.5749 & 0.5443 & 0.9051 & 0.5679 & 0.7307 & \textbf{0.9246} \\
    \bottomrule
    \end{tabular}%
    }
    \label{realdata}
\end{table*}%

\section{Experiments}

\subsection{Datasets and Metrics}

\textbf{\textit{Datasets.}} We employ five publicly available real-world benchmark datasets and a large-scale dataset. ACM \cite{fan2020one2multi}, ACM2 \cite{fu2020magnn}, and DBLP \cite{zhao2020network} are citation networks. Yelp \cite{lu2019relation} and Amazon \cite{he2016ups} are review networks. MAG \cite{wang2020microsoft} is a large-scale citation network, constituting the largest dataset in multi-relation graph clustering thus far.

\textbf{\textit{Metrics.}} We adopt four popular clustering metrics, including Accuracy (ACC), Normalized Mutual Information (NMI), F1 score, and Adjusted Rand Index (ARI). A higher value of them indicates a better performance.

\subsection{Experimental Setup}

\textbf{\textit{Baselines.}} We compare BMGC with various baselines, including the supervised multiview graph method HAN \cite{wang2019heterogeneous}, single-view graph clustering methods VGAE \cite{kipf2016variational} and DGI \cite{velivckovic2018deep}, and multiview graph clustering methods O2MAC \cite{fan2020one2multi}, DMGI \cite{park2020unsupervised}, MvAGC \cite{lin2021multi}, HDMI \cite{jing2021hdmi}, MCGC \cite{pan2021multi}, MGDCR \cite{mo2023multiplex}, DuaLGR \cite{ling2023dual}, DMG \cite{mo2023disentangled}, and BTGF \cite{qian2023upper}. All other methods are unsupervised excluding HAN, which serves as the supervised baseline.

\textbf{\textit{Parameter Setting.}} Our model is trained for 400 epochs using the Adam optimizer with a learning rate of 1e-2. The weight decay of the optimizer is set to 1e-4. The recalculation interval $t$ for the dominant view is every 50 epochs. We set the representation dimension $d_r$ to 64 for ACM2 dataset and 10 for the other datasets. The temperature parameter $\tau$ is fixed at 1. The radius of graph propagation, $K$, is fixed at 3 and the teleport probability $\alpha$ is tuned in $[0,0.3,0.5]$. All experiments are implemented on the PyTorch platform using an Intel(R) Xeon(R) Platinum 8352V CPU and a GeForce RTX 4090 24G GPU.

\subsection{Evaluation on Real-world Datasets}

To evaluate the performance of our model, we compare BMGC with multiple baselines on five real-world datasets. For the supervised baseline HAN, we employ k-means on the node embeddings of the test set to yield clustering results. We conduct single-view clustering methods separately for each view and present the best results. Generally, BMGC consistently outperforms all compared methods regarding four metrics over all datasets. From Table \ref{realdata}, we have the following observations:

\begin{itemize}
    \item The advantages of BMGC become evident when compared to other methods. In particular, our model significantly outperforms existing methods, including the supervised baseline, on Amazon and ACM2 datasets. Regarding second-place results on Amazon, our model improves NMI and ARI by $42.9\%$ and $32.7\%$, respectively.
    \item In general, multiview graph methods outperform single-view methods like VGAE and DGI, demonstrating the superiority of multiview methods in graph clustering. However, in the Yelp dataset, most multiview baselines underperform compared to single-view methods, which may be attributed to the fact that these multiview methods overlook the imbalance among different views, leading to worse performance. Moreover, while the supervised baseline HAN surpasses the unsupervised baselines on most datasets, BMGC still outperforms it, underscoring the superiority of our method.

    \item Our model outperforms O2MAC which considers information differences among views. O2MAC retains only the most informative view while discarding others, which to some extent degenerates into a single-view method with worse results. Our model uses all the views and achieves better results by aligning the other views with the dominant view.
\end{itemize}

\begin{table*}[t]
  \centering
     \caption{Clustering results on synthetic datasets.}
\resizebox{\linewidth}{0.16\textheight}{
    \begin{tabular}{c|c|ccc|cccccc|c}
    \toprule
    Perturbation  & \multirow{2}[2]{*}{Metric} & \multicolumn{3}{c|}{SGC} & DMGI  & HDMI  & MGDCR & DuaLGR & DMG   & BTGF  & \multirow{2}[2]{*}{\textbf{BMGC}} \\
    ratios &       & view 1 & view 2 & view 3 & 2020  & 2021  & 2023  & 2023  & 2023  & 2024  &  \\
    \midrule
    \multirow{3}[2]{*}{20\%} & NMI   & 0.6142 & 0.5191 & 0.4973 & 0.675 & 0.5869 & \underline{0.8702} & 0.4065 & 0.6326 & 0.7874 & \textbf{0.9209} \\
          & ARI   & 0.7207 & 0.6278 & 0.6053 & 0.7765 & 0.694 & \underline{0.9293} & 0.5074 & 0.7321 & 0.8697 & \textbf{0.9612} \\
          & ACC   & 0.9245 & 0.8962 & 0.8891 & 0.9406 & 0.9165 & \underline{0.982} & 0.8562 & 0.9278 & 0.9663 & \textbf{0.9902} \\
    \midrule
    \multirow{3}[2]{*}{50\%} & NMI   & 0.6142 & 0.3956 & 0.3896 & 0.6425 & 0.4766 & \underline{0.7959} & 0.3314 & 0.5748 & 0.7213 & \textbf{0.8913} \\
          & ARI   & 0.7207 & 0.4959 & 0.4888 & 0.7471 & 0.583 & \underline{0.8761} & 0.4229 & 0.683 & 0.8162 & \textbf{0.9432} \\
          & ACC   & 0.9245 & 0.8521 & 0.8496 & 0.9322 & 0.8818 & \underline{0.968} & 0.8252 & 0.9132 & 0.9517 & \textbf{0.9856} \\
    \midrule
    \multirow{3}[2]{*}{100\%} & NMI   & 0.6142 & 0.2514 & 0.267 & 0.573 & 0.2821 & \underline{0.6887} & 0.322 & 0.5195 & 0.6505 & \textbf{0.8178} \\
          & ARI   & 0.7207 & 0.327 & 0.3457 & 0.6812 & 0.3646 & \underline{0.7885} & 0.4115 & 0.6257 & 0.7545 & \textbf{0.8926} \\
          & ACC   & 0.9245 & 0.7859 & 0.7941 & 0.9127 & 0.8019 & \underline{0.944} & 0.8208 & 0.8955 & 0.9343 & \textbf{0.9724} \\
    \midrule
    \multirow{3}[2]{*}{150\%} & NMI   & \underline{0.6142} & 0.1679 & 0.1564 & 0.4372 & 0.0374 & 0.5711 & 0.3079 & 0.4659 & 0.6104 & \textbf{0.7821} \\
          & ARI   & \underline{0.7207} & 0.223 & 0.2086 & 0.5362 & 0.0441 & 0.6789 & 0.3953 & 0.5657 & 0.7175 & \textbf{0.8656} \\
          & ACC   & \underline{0.9245} & 0.7361 & 0.7284 & 0.8662 & 0.5948 & 0.9121 & 0.8144 & 0.8761 & 0.9235 & \textbf{0.9652} \\
    \bottomrule
    \end{tabular}%
    }
  \label{csbm}
\end{table*}%

\subsection{Evaluation on Synthetic Datasets}

To further compare BMGC with other methods in addressing the imbalanced problem, we introduce a new synthetic dataset based on cSBM \cite{chien2020adaptive}, named multi-relational cSBM. The multi-relational cSBM initially generates three views, each possessing unique graph structures with uniform homophily ratios, and sharing a common feature matrix.
All nodes are categorized into two classes. We randomly add noisy edges to two of these graphs to induce perturbations, where the perturbation ratio $\rho$ controls the proportion of randomly added edges, simulating the imbalanced multi-relational graph. The undisturbed view is denoted as view 1, representing the dominant view, while the other two views are referred to as view 2 and view 3. Experiments are carried out for four $\rho$ values: [20\%, 50\%, 100\%, 150\%].

To reveal the performance discrepancies of different views, we use SGC on each view to obtain view-specific aggregated features and then obtain clustering results through k-means. We select several representation learning-based approaches for comparison. The results, as shown in Table \ref{csbm}, indicate that a higher perturbation ratio leads to poorer performance for all methods. Our detailed observations are as follows.

First, the majority of multiview graph clustering methods yield unsatisfactory results. As the perturbation ratio increases, their performance degrades to a lower level. Second, as the perturbation ratio reaches 150\%, the performance of the comparative methods even drops below the SGC result in view 1, indicating that when there is extensive noise in certain views of the dataset, the performance of multiview methods may deteriorate compared to the single view methods. In our method, aligning with the dominant view prevents the result from being impaired by low-quality views with noise. Third, our model maintains relatively stable performance as the perturbation ratio increases, with a maximal variation range of 17.7\%, 11\%, and 2.6\% for NMI, ARI, and ACC respectively, showcasing the robustness for noisy data.

\begin{table}[t]
  \centering
  \caption{Quantitative results with standard deviation ($\%\pm\sigma$) and execution time (seconds) on MAG.}
\resizebox{\linewidth}{!}{
    \begin{tabular}{lccccc}
    \toprule
    \multicolumn{1}{c}{Methods} & NMI   & ARI   & ACC   & F1    & Time \\
    \midrule
    k-means & 42.04 & 32.34 & 58.63 & 59.81 & - \\
    DGI   & 53.56$\pm$0.48 & 42.60$\pm$0.83 & 59.89$\pm$1.10 & 57.17$\pm$1.88 & \underline{36} \\
    \midrule
    DMGI  & 49.71$\pm$1.37 & 38.91$\pm$1.35 & 53.57$\pm$0.54 & 49.59$\pm$1.39 & 118 \\
    HDMI  & 48.15$\pm$0.98 & 34.92$\pm$1.27 & 51.78$\pm$1.37 & 49.80$\pm$2.04 & 105 \\
    MGDCR & \underline{54.43$\pm$1.17} & \underline{43.98$\pm$1.16} & \underline{61.37$\pm$2.46} & \underline{60.53$\pm$3.19} & 39 \\
    DMG  & 44.04$\pm$3.32 & 36.97$\pm$2.86 & 57.65$\pm$2.03 & 55.32$\pm$2.53 & 95 \\
    \textbf{BMGC} & \textbf{57.01$\pm$0.19} & \textbf{47.84$\pm$0.27} & \textbf{65.31$\pm$1.25} & \textbf{63.68$\pm$1.84} & \textbf{25} \\
    \bottomrule
    \end{tabular}%
    }
  \label{large}%
\end{table}%

\subsection{Evaluation on Large-scale Dataset}

To evaluate the efficiency of BMGC, we conduct experiments on a large-scale multi-relational graph MAG. We select some scalable representation learning-based methods as baselines, while the remaining models run out of memory. We set the representation dimension to $128$ and the batch size to $5000$. Table \ref{large} presents the results with standard deviation and training time. Due to our scalable graph encoding that eliminates time-consuming neighbor sampling and graph convolution operations during training, BMGC achieves optimal results with the shortest training time.

In summary, across all datasets, BMGC consistently exhibits superior performance. The stable results obtained in these experiments demonstrate the effectiveness of our methods in addressing the view imbalance of multi-relational graphs.

\begin{table*}[htbp]
  \centering
  \caption{Performance of BMGC and its variants.}
\resizebox{\linewidth}{0.075\textheight}{
    \begin{tabular}{ccccccccccc}
    \toprule
    \multirow{2}[4]{*}{Variants} & \multicolumn{2}{c}{Amazon} & \multicolumn{2}{c}{ACM} & \multicolumn{2}{c}{DBLP} & \multicolumn{2}{c}{ACM2} & \multicolumn{2}{c}{Yelp} \\
\cmidrule{2-11}          & NMI   & ACC   & NMI   & ACC   & NMI   & ACC   & NMI   & ACC   & NMI   & ACC \\
    \midrule
    \textbf{BMGC} & \textbf{0.5768} & \textbf{0.7856} & \textbf{0.7841} & \textbf{0.9413} & \textbf{0.8013} & \textbf{0.9401} & \textbf{0.7285} & \textbf{0.9185} & \textbf{0.7173} & \textbf{0.9151} \\
    w/o $\mathcal{L}_{ADV}$ & 0.5303 & 0.7534 & 0.7366 & 0.9261 & 0.7773 & 0.9314 & 0.6054 & 0.8276 & 0.6763 & 0.8955 \\
    w/o $\mathcal{L}_{ANF}$ & 0.4234 & 0.6452 & 0.7667 & 0.9368 & 0.7923 & 0.9349 & 0.6762 & 0.8816 & 0.7041 & 0.9139 \\
    w/o $\mathcal{L}_{CLU}$ & 0.5629 & 0.7794 & 0.7726 & 0.9371 & 0.7857 & 0.9346 & 0.7263 & 0.7477 & 0.6864 & 0.9052 \\
    \bottomrule
    \end{tabular}%
    }
  \label{ablation}%
\end{table*}%

\subsection{Ablation Study}

To validate the effectiveness of different components in our model, we compare the performance of BMGC with its three variants:
\begin{itemize}
    \item Employing BMGC without $\mathcal{L}_{ADV}$ to show the significance of alignment with the dominant view.
    \item Employing BMGC without $\mathcal{L}_{ANF}$ to observe the impact of alignment with the node features.
    \item Employing BMGC without $\mathcal{L}_{CLU}$ to reveal the influence of the dominant assignment on clustering performance.
\end{itemize}
Based on Table \ref{ablation}, we can draw the following conclusions. First, the results of BMGC are better than all variants, indicating that all components are critical to our model. Second, the loss of alignment with the dominant view ($\mathcal{L}_{ADV}$) seems to make the most contribution to the results, while the loss for alignment with the node features ($\mathcal{L}_{ANF}$) contributes more to Amazon. This could be attributed to the universally subpar quality of all graph structures within the Amazon dataset, thereby amplifying the importance of node features. Additionally, the dominant assignment ($\mathcal{L}_{CLU}$) indeed improves clustering performance.

\subsection{Case Study}\label{case_study}

\textbf{Effectiveness of Unsupervised Mining.}
We delve deep into examining the impact of the perturbation ratio on both the accuracy and the unsupervised metric $(\Vert XX^\top-X^v(X^v)^\top \Vert_F^2\ /\ N)$ in View 3 of the synthetic dataset. As depicted in Fig. \ref{case_2}, it is conspicuous that with the increase of the perturbation ratio from 0 to 150\%, the accuracy consistently decreases, indicating a continuous decline in view quality. In parallel, the corresponding unsupervised metric indeed rises. This highlights the effectiveness of our unsupervised dominant view extraction method, aligning with the conclusions drawn in the theoretical analysis in Section \ref{theroetical}.

\begin{figure}[t]
	\centering
	\begin{subfigure}{0.48\linewidth}
		\centering
		\includegraphics[width=1.0\linewidth]{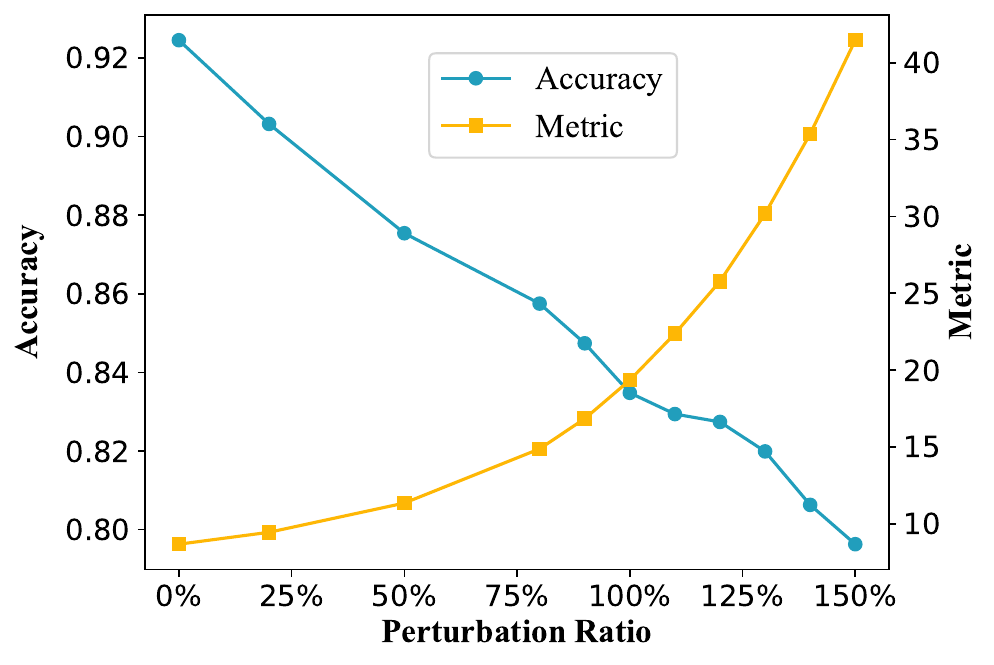}
		\caption{The effectiveness of unsupervised dominant view mining.}
		\label{case_2}
	\end{subfigure}
	\centering
        \hfill
	\begin{subfigure}{0.48\linewidth}
		\centering
		\includegraphics[width=1.0\linewidth]{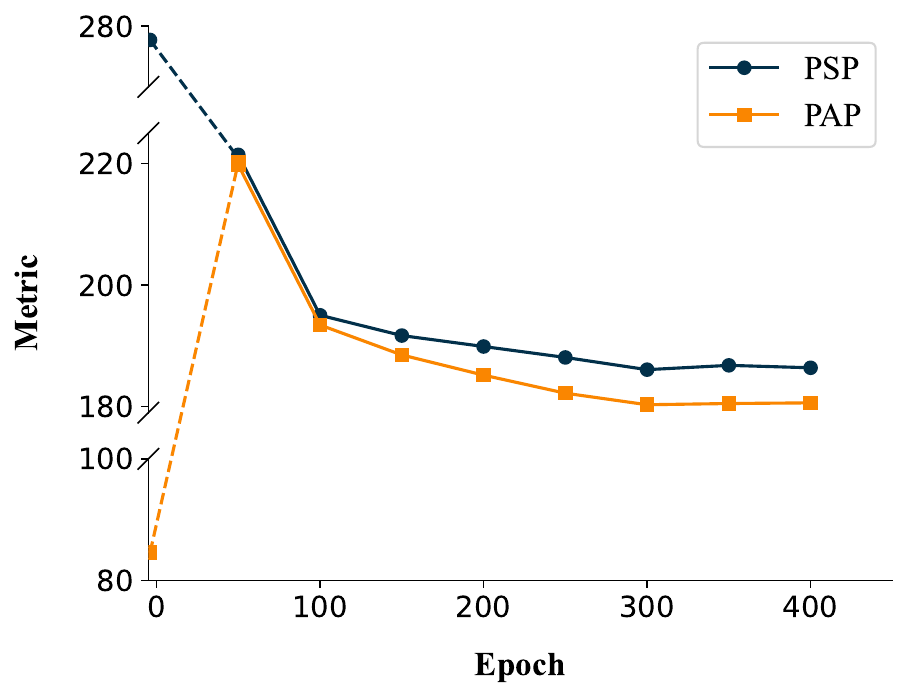}
		\caption{The reliability of dynamic mining in the training process.}
		\label{case_1}
	\end{subfigure}
	\caption{Case study on synthetic (left) and ACM (right) datasets. The specific meanings of the ``Metric'' in each figure can be found in Section \ref{case_study}.}
	\label{Case_Study}
\end{figure}

\textbf{Reliability of Dynamic Mining.}
To provide a detailed description of the process through which our model uncovers the dominant view, we demonstrate the evolution of the unsupervised metric $(\Vert XX^\top-Z^v(Z^v)^\top \Vert_F^2 \ / \ N)$, used to mine the dominant view, for each view of the ACM dataset. 
As illustrated in Fig. \ref{case_1}, the data points on the y-axis represent the aggregated node features used to initialize the dominant view. These points are connected by dashed lines to subsequent data points derived from node representations. Throughout the training, the metrics of both views decrease, and the PAP view consistently exhibits lower values compared to the PSP view. This consistent trend indicates that PAP emerges as the dominant view, aligning seamlessly with our empirical analysis. After 250 epochs, the metrics converge. This result emphasizes the reliability of dynamically excavating the dominant view.

\subsection{Hyper-parameters Study}

\begin{figure}[t]
	\centering
	\begin{subfigure}{0.49\linewidth}
		\centering
		\includegraphics[width=1.0\linewidth]{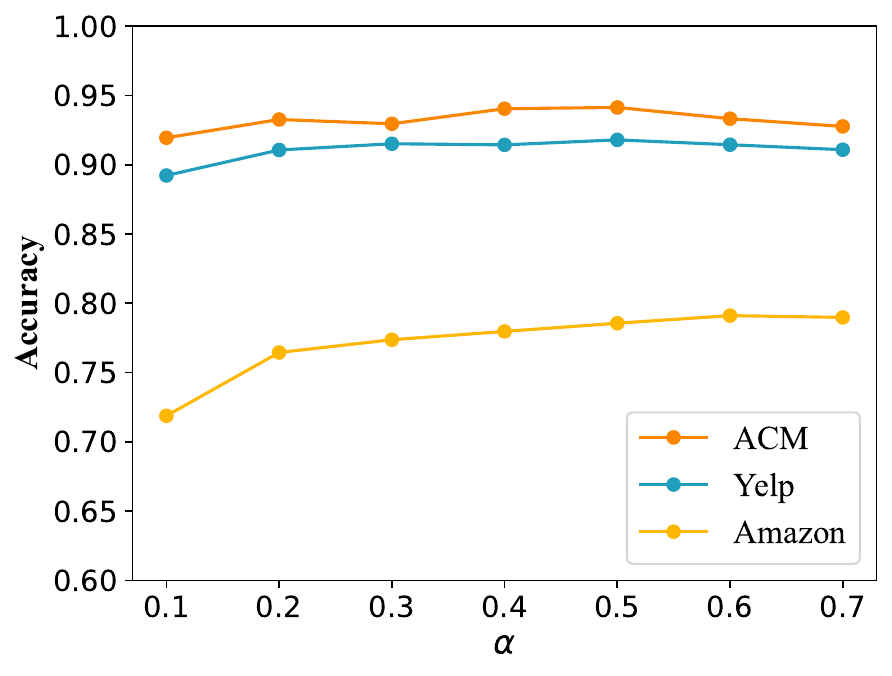}
		\label{alpha}
	\end{subfigure}
	\centering
	\begin{subfigure}{0.49\linewidth}
		\centering
		\includegraphics[width=1.0\linewidth]{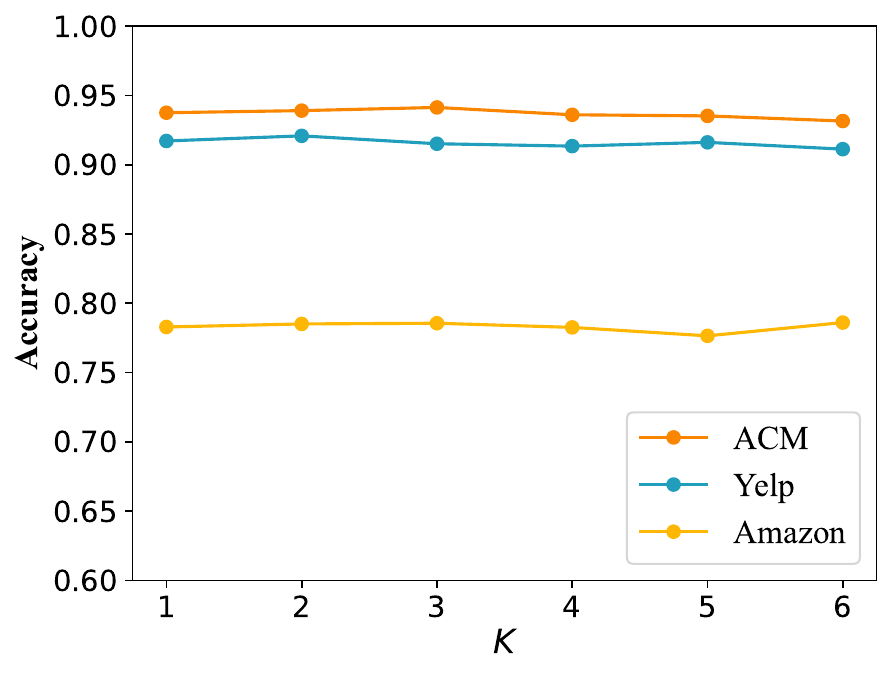}
		\label{K}
	\end{subfigure}
	\caption{The influence of $\alpha$ (left) and $K$ (right).}
	\label{alpha&K}
\end{figure}

We conduct a hyper-parameter analysis on the teleport probability $\alpha$ and the radius of graph propagation $K$ on three datasets ACM, Yelp, and Amazon. The result is given in Fig. \ref{alpha&K}. From the figure on the left, we can observe that our model shows low sensitivity to the change of $\alpha$. However, when $\alpha$ is too low, the performance shows a noticeable decrease. This can be attributed to the fact that the lower value of $\alpha$ leads to a decreased influence of the features of the original nodes in the propagation process. In the figure on the right, we can see that the performance is stable to the change of $K$. Notable performance can be achieved when $K$ is small, improving the efficiency of the model in practical applications.

\section{Conclusion}

In this study, we thoroughly investigate the prevalent challenge of view imbalance in real-world multi-relational graphs. We introduce a novel metric, the Aggregation Class Distance, to empirically quantify structural disparities among different graphs. To tackle view imbalance, we propose Balanced Multi-Relational Graph Clustering, which dynamically mines the dominant view throughout the training process, collaborating with representation learning to enhance clustering performance. Theoretical analysis validates the efficacy of unsupervised dominant view mining. Extensive experiments and in-depth analysis on both real-world and synthetic datasets consistently demonstrate the superiority of our model over existing state-of-the-art methods.

\begin{acks}
This work was supported by the National Natural Science Foundation of China (No. 62276053).
\end{acks}

\bibliographystyle{ACM-Reference-Format}
\balance
\normalem


\newpage

\appendix

\section{Algorithm}

\begin{algorithm}
    \caption{The pseudo-code of the proposed BMGC}
    \begin{algorithmic}[1]
        \REQUIRE Node features $X$, adjacency matrices $\tilde{A}^1$, $\tilde{A}^2$, ..., $\tilde{A}^V$ where $V$ is the number of relations, the number of clusters $C$, initialized model parameters $\Theta$.
        \STATE Obtain view-specific aggregated features $X^v$ by Eq. (3) before training;
        \STATE Initialize the dominant view by Eq. (7);
        \WHILE{not reaching maximum epochs}
             \IF{every $t$ epochs}
                \STATE Recalculate the dominant view by Eq. (8);
             \ENDIF
                \STATE Obtain node representations $Z^v$ with encoder $f_{\Theta}$;
                \STATE Obtain the reconstruction of view-specific aggregated features $\hat{X}^v$ with decoder $g_{\Theta}$;
                \STATE Calculate the reconstruction loss $\mathcal{L}_{REC}$ by Eq. (6);
                \STATE Calculate the co-aligned representation learning loss $\mathcal{L}_{CAL}$ by Eq. (9), Eq. (10), Eq. (11) and Eq. (12);
                \STATE Obtain the dominant assignment by applying k-means to the representations of the dominant view by Eq. (13);
                \STATE Compute the soft alignment distribution $Q^v$ and the target distribution $P^v$ for each view by Eq. (14) and Eq. (15);
                \STATE Obtain the final node representation $Z=[Z^1,\cdots, Z^V]$ by concatenating representations from all views;
                \STATE Apply k-means on the final representation $Z$ and compute $Q$ and $P$ distribution of $Z$;
                \STATE Calculate the self-training clustering loss $\mathcal{L}_{CLU}$ by Eq. (16);
                \STATE Compute the overall objective $\mathcal{L}$ by Eq. (17);
                \STATE Back-propagate $\mathcal{L}$ to update model weights;
        \ENDWHILE
        \STATE Apply k-means on the final node representation $Z$ to obtain the clustering results;
        \ENSURE The final node representation $Z$ and the clustering results.
    \end{algorithmic}
\end{algorithm}

\section{Detailed Proofs}

\textbf{\textit{Data Assumption.}} A multi-relational graph $G$ has $N$ nodes partitioned into 2 equally sized communities $C_1$ and $C_2$. Let $c_1, c_2\in\{0,1\}^N$ be indicator vectors for membership in each community, that is, the $j^{th}$ entry of $c_i$ is $1$ if the $j^{th}$ node is in $C_i$ and 0 otherwise. $G$ has $V$ views, each is generated by SBM \cite{abbe2018community}, with intra- and inter- community edge probabilities $p^v$ and $q^v$. $G$ is such a graph model with a feature matrix $X=F+H\in\mathbb{R}^{N\times d_f}$, where each column of $H$ follows a zero-centered, isotropic Gaussian noise distribution $\mathcal{N}(0,\sigma^2I)$ and these columns are mutually independent. The matrix $F$ is defined as $F=c_1\mu_1^{\top}+c_2\mu_2^{\top}$, where $\mu_1,\mu_2\in\mathbb{R}^N$ has the same Euclidean norm $\Vert\mu\Vert$, representing the expected characteristic vector of each community. In addition, let $\bar{\mu}=\frac{1}{2}(\mu_1+\mu_2)$ be the average of the feature vector means.

\setcounter{myTheo}{0}
\setcounter{myLem}{0}

\begin{myLem}
Let $X^v$ be the aggregated feature matrix of the $v$-th view by applying SGC, with the number of hops $K$, to the expected adjacency matrix $\tilde{A}^v$ and the feature matrix $X$. Then, $X^v=F^v+c_1(\theta_1^v)^\top+c_2(\theta_2^v)^\top$, where $F^v=(\lambda_2^v)^KF+(1-(\lambda_2^v)^K)(\mathbf{1}\bar{\mu}^\top)$, $\theta_1^v$ and $\theta_2^v\in \mathbb{R}^{d_f}$ are both distributed according to $\mathcal{N}(0,\frac{1}{N}(1+(\lambda_2^v)^{2K})\sigma^2I)$, and $\lambda_2^v=\frac{p^v-q^v}{p^v+q^v}\in[-1,1]$ is the second largest non-zero eigenvalue of the associated normalized adjacency matrix $A^v$.
\end{myLem}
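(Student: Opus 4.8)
The plan is to work entirely in the two-dimensional eigenspace that the block structure of the stochastic block model forces on the normalized expected adjacency matrix, propagating the signal $F$ and the noise $H$ through $(A^v)^K$ separately, since $X^v=(A^v)^K X=(A^v)^K F+(A^v)^K H$. First I would write the expected adjacency matrix as $\mathbb{E}[\tilde A^v]=p^v(c_1c_1^\top+c_2c_2^\top)+q^v(c_1c_2^\top+c_2c_1^\top)$; because every node has expected degree $\tfrac{N}{2}(p^v+q^v)$, the degree matrix is a scalar multiple of the identity and $A^v=\tfrac{2}{N(p^v+q^v)}\mathbb{E}[\tilde A^v]$. A direct check then shows that $\mathbf{1}=c_1+c_2$ is an eigenvector with eigenvalue $1$ and $c_1-c_2$ is an eigenvector with eigenvalue $\lambda_2^v=\tfrac{p^v-q^v}{p^v+q^v}$, while everything orthogonal to these two vectors lies in the kernel. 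Hence $(A^v)^K=\tfrac{1}{N}\mathbf{1}\mathbf{1}^\top+(\lambda_2^v)^K\tfrac{1}{N}(c_1-c_2)(c_1-c_2)^\top$, a scaled sum of two rank-one projectors, and this single identity drives the whole argument.

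For the signal term, I would re-express $F=c_1\mu_1^\top+c_2\mu_2^\top$ in the eigenbasis via $c_1=\tfrac12(\mathbf{1}+(c_1-c_2))$ and $c_2=\tfrac12(\mathbf{1}-(c_1-c_2))$, which yields $F=\mathbf{1}\bar\mu^\top+\tfrac12(c_1-c_2)(\mu_1-\mu_2)^\top$. Applying $(A^v)^K$ multiplies the $\mathbf{1}$-component by $1$ and the $(c_1-c_2)$-component by $(\lambda_2^v)^K$, and a short rearrangement recovers exactly $F^v=(\lambda_2^v)^K F+(1-(\lambda_2^v)^K)(\mathbf{1}\bar\mu^\top)$ as claimed.

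For the noise term, applying the projector form of $(A^v)^K$ to $H$ gives $(A^v)^K H=\mathbf{1}a^\top+(\lambda_2^v)^K(c_1-c_2)b^\top$, where $a=\tfrac1N H^\top\mathbf{1}$ and $b=\tfrac1N H^\top(c_1-c_2)$. Substituting $\mathbf{1}=c_1+c_2$ and regrouping by community produces $c_1(\theta_1^v)^\top+c_2(\theta_2^v)^\top$ with $\theta_1^v=a+(\lambda_2^v)^K b$ and $\theta_2^v=a-(\lambda_2^v)^K b$. It then remains to identify the law of $\theta_i^v$: each coordinate of $a$ and of $b$ is a fixed linear combination of the i.i.d. $\mathcal N(0,\sigma^2)$ entries of one column of $H$, so $a,b\sim\mathcal N(0,\tfrac{\sigma^2}{N}I)$, and summing the variances of two independent scaled copies gives $\tfrac{1}{N}(1+(\lambda_2^v)^{2K})\sigma^2 I$.

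The main obstacle is the noise bookkeeping rather than the signal computation: I must verify that $a$ and $b$ are genuinely independent (equivalently uncorrelated, since they are jointly Gaussian) so that the cross terms vanish and $\operatorname{Var}(\theta_i^v)=\operatorname{Var}(a)+(\lambda_2^v)^{2K}\operatorname{Var}(b)$ is the clean sum above. This hinges on $\sum_i(c_1-c_2)_i=0$, which holds precisely because the two communities are of equal size; if they were unbalanced, $a$ and $b$ would be correlated and the stated isotropic Gaussian form would fail. I would therefore make the equal-size assumption explicit exactly at the point where the covariance $\operatorname{Cov}(a_j,b_j)=\tfrac{\sigma^2}{N^2}\sum_i(c_1-c_2)_i$ is shown to vanish.
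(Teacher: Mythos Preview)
Your proposal is correct and follows essentially the same route as the paper: both arguments rest on the rank-two spectral decomposition $(A^v)^K=\tfrac{1}{N}\mathbf{1}\mathbf{1}^\top+(\lambda_2^v)^K\tfrac{1}{N}(c_1-c_2)(c_1-c_2)^\top$, propagate $F$ and $H$ through it separately, and identify the resulting noise vectors as Gaussian with the stated variance. The only cosmetic difference is that the paper invokes rotation invariance of isotropic Gaussians in one stroke where you compute the covariance of $a$ and $b$ directly and make the equal-community-size dependence explicit; the content is the same.
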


\begin{myTheo}
Let $\bar{X}^v_1$ and $\bar{X}^v_2$ denote the centroid of aggregated features for each community in the $v$-th view. Then, $\mathbb{E}\left[\bar{X}^v_1-\bar{X}^v_2\right]=(\lambda_2^v)^K(\mu_1-\mu_2)$ and $\mathbb{E}\left[XX^\top-X^v(X^v)^\top\right]=\frac{1-(\lambda_2^v)^{2K}}{2}(\Vert\mu\Vert^2-\mu_1^\top\mu_2)\\(c_1c_1^\top+c_2c_2^\top-c_1c_2^\top-c_2c_1^\top)+\omega(\sigma^2)$, where $\omega(\sigma^2)$ represents the sum of terms containing $\sigma^2$ that are of negligible magnitude.
\end{myTheo}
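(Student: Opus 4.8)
The plan is to build the entire argument on Lemma~\ref{lem_x}, which already supplies the decomposition $X^v=F^v+c_1(\theta_1^v)^\top+c_2(\theta_2^v)^\top$ with $F^v=(\lambda_2^v)^K F+(1-(\lambda_2^v)^K)(\mathbf{1}\bar\mu^\top)$ and zero-mean Gaussian $\theta_1^v,\theta_2^v$. For the first identity I would simply read off the centroids. Since the $j$-th row of $F=c_1\mu_1^\top+c_2\mu_2^\top$ equals $\mu_i^\top$ whenever $j\in C_i$, averaging the rows of $X^v$ over community $i$ gives $\bar X^v_i=(\lambda_2^v)^K\mu_i^\top+(1-(\lambda_2^v)^K)\bar\mu^\top+(\theta_i^v)^\top$. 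Subtracting the two, the common $\bar\mu$ term cancels, and taking expectations annihilates the zero-mean $\theta_i^v$, leaving $\mathbb{E}[\bar X^v_1-\bar X^v_2]=(\lambda_2^v)^K(\mu_1-\mu_2)$. This part is essentially immediate.

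For the second identity I would write $X=F+H$ and $X^v=F^v+H^v$ with $H^v=c_1(\theta_1^v)^\top+c_2(\theta_2^v)^\top$. Because $H$ and $H^v$ have mean zero and are independent of the deterministic parts, all cross terms vanish under expectation, so $\mathbb{E}[XX^\top-X^v(X^v)^\top]=\left(FF^\top-F^v(F^v)^\top\right)+\left(\mathbb{E}[HH^\top]-\mathbb{E}[H^v(H^v)^\top]\right)$. Using $\mathbb{E}[HH^\top]=d_f\sigma^2 I$ (columns i.i.d.\ isotropic) and the variances from Lemma~\ref{lem_x}, the second bracket is proportional to $\sigma^2$; even a possible nonzero $\mathbb{E}[(\theta_1^v)^\top\theta_2^v]$ scales with $\sigma^2$, so the whole bracket gets absorbed into $\omega(\sigma^2)$. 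The substance therefore lies entirely in $FF^\top-F^v(F^v)^\top$.

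The key computation is to expand $FF^\top=\Vert\mu\Vert^2(c_1c_1^\top+c_2c_2^\top)+(\mu_1^\top\mu_2)(c_1c_2^\top+c_2c_1^\top)$, using $\Vert\mu_1\Vert=\Vert\mu_2\Vert=\Vert\mu\Vert$, and then expand $F^v(F^v)^\top$ with the shorthand $a=(\lambda_2^v)^K$. The crucial simplification is that $F\bar\mu=\beta\mathbf{1}$ and $\bar\mu^\top\bar\mu=\beta$, where $\beta=\frac{1}{2}(\Vert\mu\Vert^2+\mu_1^\top\mu_2)$, both following from $\mu_1^\top\bar\mu=\mu_2^\top\bar\mu=\beta$ together with $c_1+c_2=\mathbf{1}$. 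This collapses every cross term and teleport term onto $\beta\mathbf{1}\mathbf{1}^\top$, with combined coefficient $2a(1-a)+(1-a)^2=(1-a)(1+a)=1-a^2$, giving $F^v(F^v)^\top=a^2FF^\top+(1-a^2)\beta\mathbf{1}\mathbf{1}^\top$. Hence $FF^\top-F^v(F^v)^\top=(1-a^2)\left(FF^\top-\beta\mathbf{1}\mathbf{1}^\top\right)$, and substituting $\mathbf{1}\mathbf{1}^\top=c_1c_1^\top+c_1c_2^\top+c_2c_1^\top+c_2c_2^\top$ produces the coefficient $\frac{1}{2}(\Vert\mu\Vert^2-\mu_1^\top\mu_2)$ on the block pattern $c_1c_1^\top+c_2c_2^\top-c_1c_2^\top-c_2c_1^\top$, recovering exactly the stated expression with $1-a^2=1-(\lambda_2^v)^{2K}$.

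The main obstacle is the bookkeeping in the $F^v(F^v)^\top$ expansion: recognizing that the teleport correction $\mathbf{1}\bar\mu^\top$ couples to $F$ only through the scalar $\beta$, so that all mixed terms become multiples of the rank-one block $\mathbf{1}\mathbf{1}^\top$, and verifying that their coefficients telescope to $1-a^2$. Once that collapse is established, matching the final coefficients against $\Vert\mu\Vert^2-\mu_1^\top\mu_2$ and collecting the $\sigma^2$-order remainder into $\omega(\sigma^2)$ are routine.
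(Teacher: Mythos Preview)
Your proposal is correct and follows essentially the same route as the paper: invoke Lemma~\ref{lem_x}, read off the centroids for the first identity, and for the second expand $XX^\top$ and $X^v(X^v)^\top$ separating deterministic from noise parts so that the $\sigma^2$-terms collect into $\omega(\sigma^2)$ and the content reduces to $FF^\top-F^v(F^v)^\top$. Your introduction of $\beta=\tfrac{1}{2}(\Vert\mu\Vert^2+\mu_1^\top\mu_2)$ and the telescoping $2a(1-a)+(1-a)^2=1-a^2$ packages the algebra more cleanly than the paper's line-by-line expansion, but the computation is the same.
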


We first prove Lemma \ref{lem_x}. Since the proof process is identical for each view, we omit the superscript $v$ of some view-related symbols for simplicity in the proof.

\begin{proof}[Proof of Lemma \ref{lem_x}]
In expectation, an entry $\tilde{A}_{ij}$ of the adjacency matrix of the graph is $p$ if both $i,j\in C_1$ or both $i,j\in C_2$, and it is $q$ otherwise. The eigendecomposition $\mathbf{Q}diag(\bm{\lambda})\mathbf{Q}^\top$ of the associated normalized adjacency matrix $A$ has two non-zero eigenvalues: $\lambda_1=1$, with eigenvector $\mathbf{q_1}=\frac{1}{\sqrt{N}}\mathbf{1}=\frac{1}{\sqrt{N}}(c_1+c_2)$, and $\lambda_2=\frac{p-q}{p+q}$, with $\mathbf{q_2}=\frac{1}{\sqrt{N}}(c_1-c_2)$.

Zero-centered, isotropic Gaussian distributions are invariant to rotation, which means $\mathbf{Q}^\top H_{:, i}\sim \mathcal{N}(0,\sigma^2I)$ for any orthonormal matrix $\mathbf{Q}$, so $X^v$ can be expressed as follows:

\begin{equation}
\begin{aligned}
X^v&=A^KX \\
&=\mathbf{Q}diag(\bm{\lambda})^K\mathbf{Q}^\top(c_1\mu_1^\top+c_2\mu_2^\top+H) \\
&=\mathbf{q_1q^\top_1}(c_1\mu_1^\top+c_2\mu_2^\top+H)+\lambda_2^K\mathbf{q_2q_2^\top}(c_1\mu_1^\top+c_2\mu_2^\top+H) \\
&=\mathbf{q_1}\left[\frac{\sqrt{N}}{2}(\mu_1^\top+\mu_2^\top)+h_1^\top\right]+\lambda_2^K\mathbf{q_2}\left[\frac{\sqrt{N}}{2}(\mu_1^\top-\mu_2^\top)+h_2^\top\right] \\
&=(c_1+c_2)\left[\frac{1}{2}(\mu_1^\top+\mu_2^\top)+\frac{1}{\sqrt{N}}h_1^\top\right] \\ & \quad +\lambda_2^K(c_1-c_2)\left[\frac{1}{2}(\mu_1^\top-\mu_2^\top)+\frac{1}{\sqrt{N}}h_2^\top\right] \\
&=\lambda_2^K(c_1\mu_1^\top+c_2\mu_2^\top)+\frac{(1-\lambda_2^K)}{2}(c_1+c_2)(\mu_1+\mu_2)^\top+ \\ & \quad \frac{1}{\sqrt{N}}c_1(h_1+\lambda_2^Kh_2)^\top+\frac{1}{\sqrt{N}}c_2(h_1-\lambda_2^Kh_2)^\top \\
&=\lambda_2^KF+(1-\lambda_2^K)\mathbf{1}\bar{\mu}^\top + c_1\theta_1^\top+c_2\theta_2^\top
\end{aligned}
\end{equation}
\end{proof}

\begin{proof}[Proof of Theorem \ref{XX}]
Based on the Lemma \ref{lem_x}, we can obtain the following.

\begin{equation}
\begin{aligned}
\mathbb{E}\left[\bar{X}^v_1-\bar{X}^v_2\right]&=\mathbb{E}\left[\lambda_2^K\mu_1+(1-\lambda_2^K)\bar{\mu} + \theta_1\right] \\ & \quad -\mathbb{E}\left[\lambda_2^K\mu_2+(1-\lambda_2^K)\bar{\mu} + \theta_2\right] \\ 
&=\lambda_2^K(\mu_1-\mu_2)
\end{aligned}
\end{equation}
Similarly, we can formulate the expression for $\mathbb{E}\left[XX^\top-X^v(X^v)^\top\right]$ as follows:

\begin{equation}
\begin{aligned}
&\mathbb{E}\left[XX^\top-X^v(X^v)^\top\right] \\ &=\mathbb{E}\left[(F+H)(F+H)^\top\right] -\mathbb{E}\left[( \lambda_2^KF+(1-\lambda_2^K) \mathbf{1}\bar{\mu}^\top+c_1\theta_1^\top \right. \\ & \quad \left.+c_2\theta_2^\top ) (\lambda_2^KF+(1-\lambda_2^K) \mathbf{1}\bar{\mu}^\top+c_1\theta_1^\top+c_2\theta_2^\top )^\top \right] \\ 
&=FF^\top+\mathbb{E}(HH^\top)-\mathbb{E}\left[\lambda_2^{2k}FF^\top+\lambda_2^K(1-\lambda_2^K)(F\bar{\mu}\mathbf{1}^\top+\mathbf{1}\bar{\mu}^\top F^\top)\right. \\ & \quad \left.+(1-\lambda_2^K)^2 \bar{\mu}^\top\bar{\mu} \mathbf{1}\mathbf{1}^\top+ \theta_1^\top\theta_1 c_1c_1^\top+ \theta_2^\top\theta_2 c_2c_2^\top \right. \\ & \quad \left. +\theta_1^\top\theta_2 (c_1c_2^\top+c_2c_1^\top) \right] \\
&=(1-\lambda_2^{2K})FF^\top-\lambda_2^K(1-\lambda_2^K) \left[(c_1\mu_1^\top+c_2\mu_2^\top)\frac{1}{2}(\mu_1+\mu_2)\mathbf{1}^\top \right. \\ & \quad \left. +\mathbf{1}\frac{1}{2}(\mu_1+\mu_2)^\top(c_1\mu_1^\top+c_2\mu_2^\top)^\top \right]-(1-\lambda_2^K)^2 \frac{1}{4}(\mu_1+\mu_2)^\top \\ & \quad (\mu_1+\mu_2)\mathbf{1}\mathbf{1}^\top+\mathbb{E}(HH^\top)-\mathbb{E}\left[\theta_1^\top\theta_1 c_1c_1^\top+ \theta_2^\top\theta_2 c_2c_2^\top \right. \\ & \quad \left. +\theta_1^\top\theta_2 (c_1c_2^\top+c_2c_1^\top)\right] \\
&=(1-\lambda_2^{2K})(c_1\mu_1^\top+c_2\mu_2^\top)(c_1\mu_1^\top+c_2\mu_2^\top)^\top - \frac{1}{2}(1-\lambda_2^{2K}) \\ & \quad (\Vert\mu\Vert^2+\mu_1^\top\mu_2)\mathbf{1}\mathbf{1}^\top+d_f\sigma^2I-\frac{d_f\sigma^2}{N}(1+\lambda_2^{2K})(c_1c_1^\top+c_2c_2^\top) \\ & \quad -\mathbb{E}\left[\frac{1}{N}(h_1+\lambda_2^Kh_2)^\top (h_1-\lambda_2^Kh_2)\right] (c_1c_2^\top+c_2c_1^\top) \\
&=(1-\lambda_2^{2K})(c_1\mu_1^\top+c_2\mu_2^\top)(c_1\mu_1^\top+c_2\mu_2^\top)^\top - \frac{1}{2}(1-\lambda_2^{2K}) \\ & \quad (\Vert\mu\Vert^2+\mu_1^\top\mu_2)(c_1+c_2)(c_1+c_2)^\top+d_f\sigma^2I-\frac{d_f\sigma^2}{N}(1+\lambda_2^{2K}) \\ & \quad (c_1c_1^\top+c_2c_2^\top) - \frac{d_f\sigma^2}{N}(1-\lambda_2^{2K})(c_1c_2^\top+c_2c_1^\top) \\
&=\frac{1-\lambda_2^{2K}}{2}(\Vert\mu\Vert^2-\mu_1^\top\mu_2)(c_1c_1^\top+c_2c_2^\top-c_1c_2^\top-c_2c_1^\top) \\ & \quad + d_f\sigma^2I-\frac{d_f\sigma^2}{N}(1+\lambda_2^{2K})(c_1c_1^\top+c_2c_2^\top) \\ & \quad - \frac{d_f\sigma^2}{N}(1-\lambda_2^{2K})(c_1c_2^\top+c_2c_1^\top)
\end{aligned}
\end{equation}

We posit that the magnitude of $\Vert\mu\Vert^2-\mu_1^\top\mu_2$ far exceeds $d_f\sigma^2$. Otherwise, for example, when $d_f\sigma^2$ is notably large, the distinctions in the features of the nodes between different communities would be indiscernible. All node features would tend to converge toward random Gaussian noise, rendering the node features meaningless. Consequently, we use $\omega(\sigma^2)$ to denote the sum of terms containing $\sigma^2$, highlighting that this term has a negligible magnitude in comparison. The overall expression is given by $\mathbb{E}\left[XX^\top-X^v(X^v)^\top\right]=\frac{1-(\lambda_2^v)^{2K}}{2}(\Vert\mu\Vert^2-\mu_1^\top\mu_2)(c_1c_1^\top+c_2c_2^\top-c_1c_2^\top-c_2c_1^\top)+\omega(\sigma^2)$. Therefore, we complete the proof.
\end{proof}

\section{Synthetic Datasets}

We propose the multi-relational cSBM to generate imbalanced multi-relational graphs. We first introduce the data generation process of cSBM. Here, $N$ denotes the number of nodes, and all nodes are divided into two classes of equal size with node labels $y_i\in\{-1,+1\}$. Each node possesses a $d_f$-dimensional feature vector, obtained by random sampling from class-specific Gaussian distributions, as follows:

\begin{equation}
X_i=\sqrt{\frac{\mu}{N}}y_iu+\frac{H_i}{\sqrt{d_f}}
\end{equation}
where $u\sim \mathcal{N}(0,I/d_f)$ and $H_i\in\mathbb{R}^{d_f}$ has independent standard normal entries. The adjacency matrix $\tilde{A}$ of the generated cSBM graph is defined as:

\begin{equation}
\mathbb{P}\left[\tilde{A}_{ij}=1\right]=\left\{
\begin{aligned}
&\frac{d+\lambda\sqrt{d}}{N} \quad if\quad y_iy_j> 0\\
&\frac{d-\lambda\sqrt{d}}{N} \quad otherwise.\\
\end{aligned}
\right.
\end{equation}
where $d$ is the average degree of the generated graph. Note that $\lambda$ and $\mu$ are hyper-parameters to control the proportion of contributions from the graph structure and node features, respectively. In cSBM, the parameter $\phi=\frac{2}{\pi}arctan\left(\frac{\lambda\sqrt{\xi}}{\mu}\right)\in [-1,1]$ controls the degree of homophily, where $\xi=\frac{n}{f}$ is a control factor. A larger $\lvert\phi\lvert$ implies that the graph can provide stronger topological information, whereas when $\phi=0$, only the node features are informative. The parameters of cSBM should satisfy the constraint $\lambda^2+\frac{\mu^2}{\xi}=1+\epsilon, \epsilon>0$ to generate informative graphs. We follow \cite{chien2020adaptive} for cSBM parameter settings, using $N=5000, d_f=2000, d=5, \epsilon=3.25$.

To generate multi-relational graphs, we first create a label indicator vector and feature matrix for nodes. We then use cSBM to generate $V$ graphs with different structures but the same $\phi$ value $( \phi = 0.5)$. Uniform values of $\phi$ ensure that the initial graph structures of each view have the same homophily ratio (0.825), further providing an equal level of topological information. We choose $V=3$ to enhance realism. To simulate view imbalance among different views, we randomly add noisy edges to two of these graphs to induce perturbations, as mentioned in the main text.

\section{Additional Experiments}

\begin{table}[t]
    \centering
      \caption{Classification Accuracy (ACC), ACD, and Homophily Ratio (HR) across different $\phi$ values.}
\resizebox{\linewidth}{!}
{
    \centering

    \begin{tabular}{c|ccccccc}
    \toprule
    $\phi$ & -1    & -0.5  & -0.25 & 0     & 0.25  & 0.5   & 1 \\
    \midrule
     HR    & 0.0412 & 0.1748 & 0.3247 & 0.493 & 0.6832 & 0.8321 & 0.9599 \\
    \midrule
    ACD   & 0.3251 & 0.1606 & 0.0628 & 0.0274 & 0.0646 & 0.1645 & 0.3293 \\
    \midrule
    ACC   & 0.9829 & 0.8849 & 0.6637 & 0.5174 & 0.6711 & 0.8911 & 0.9863 \\
    \bottomrule
    \end{tabular}%
}
  \label{acd_hr}%
\end{table}%

We add an experiment to demonstrate the effectiveness of ACD and its superior reliability compared to the existing supervised metric, the homophily ratio. We use the cSBM generative model to produce a series of graphs sharing the same node features and achieve various structures solely by adjusting the $\phi$ value. $30\%$ of the nodes are randomly selected as the training set, with the rest forming the testing set. A linear layer is used as the classifier for view-specific features.

Table \ref{acd_hr} clearly shows that ACD is positively correlated with classification accuracy, which empirically proves the effectiveness of the ACD metric in measuring the relevance of graph structure to downstream tasks. Meanwhile, the existing supervised metric homophily ratio cannot adapt to various graph structures. For example, when $\phi=1$ or $-1$, both graph structures can achieve very high classification results, while the homophily ratio cannot maintain consistency. This demonstrates the superiority of ACD compared to the existing metric.

\section{Real-world Datasets}

We employ five publicly available real-world benchmark datasets and a large-scale dataset. ACM \cite{fan2020one2multi}, ACM2 \cite{fu2020magnn}, and DBLP \cite{zhao2020network} are citation networks. Yelp \cite{lu2019relation} and Amazon \cite{he2016ups} are review networks. MAG \cite{wang2020microsoft} is a large-scale citation network. The statistics of these datasets are presented in Table \ref{dataset}.

\begin{itemize}
    \item \textbf{ACM} contains 3,025 papers with graphs generated by two meta-paths (paper-subject-paper and paper-author-paper). The feature of each paper is a 1,870-dimensional bag-of-words representation of its abstract. Papers are categorized into three classes, \textit{i.e.}, database, wireless communication, and data mining.
    \item \textbf{DBLP} contains 4,057 papers with graphs generated by three meta-paths (author-paper-author, author-paper-conference-paper-author, and author-paper-term-paper-author). The feature of each paper is a 334-dimensional bag-of-words representation of its abstracts. Papers are categorized into four classes, \textit{i.e.}, database, data mining, machine learning, and information retrieval.
    \item \textbf{ACM2} contains 4,019 papers with graphs generated by two meta-paths (paper-subject-paper and paper-author-paper). The feature of each paper is a 1,902-dimensional bag-of-words representation of its abstract. Papers are categorized into three classes, \textit{i.e.}, database, wireless communication, and data mining.
    \item \textbf{Yelp} contains 2,614 businesses with graphs generated by three meta-paths (business-user-business, business-rating levels-business and business-service-business). The feature of each business is an 82-dimensional bag-of-words representation of its rating information. Businesses are categorized into three classes, \textit{i.e.}, Mexican flavor, hamburger type, and food bar.
    \item \textbf{Amazon} contains 7,621 items with graphs generated by three meta-paths (item-alsoView-item, item-alsoBought-item and item-boughtTogether-item). The feature of each item is a 2,000-dimensional bag-of-words representation of its description. Items are categorized into four classes, \textit{i.e.}, beauty, automotive, patio lawn and garden, and baby.
    \item \textbf{MAG} is a subset of OGBN-MAG \cite{wang2020microsoft}, consisting of the four largest classes. MAG contains 113,919 papers with graphs generated by two meta-paths (paper-author-paper and paper-paper). Each paper is associated with a 128-dimensional word2vec feature vector.
\end{itemize}

\begin{table}[t]
  \centering
  \caption{The statistics of the datasets.}
\resizebox{\linewidth}{0.12\textheight}{
  \renewcommand{\arraystretch}{1.3}
\begin{tabular}{|c|c|c|c|c|c|}
\hline
Datasets & Nodes   & Relation Types                                                                                                                                            & Edges                                                                   & Features & Classes \\ \hline
ACM      & 3,025   & \begin{tabular}[c]{@{}c@{}}Paper-Subject-Paper (PSP)\\ Paper-Author-Paper (PAP)\end{tabular}                                                              & \begin{tabular}[c]{@{}c@{}}2,210,761\\ 29,281\end{tabular}              & 1,870    & 3       \\ \hline
DBLP     & 4,057   & \begin{tabular}[c]{@{}c@{}}Author-Paper-Author (APA)\\ Author-Paper-Conference-Paper-Author (APCPA)\\ Author-Paper-Term-Paper-Author (APTPA)\end{tabular} & \begin{tabular}[c]{@{}c@{}}11,113\\ 5,000,495\\ 6,776,335\end{tabular}  & 334      & 4       \\ \hline
ACM2     & 4,019   & \begin{tabular}[c]{@{}c@{}}Paper-Subject-Paper (PSP)\\ Paper-Author-Paper (PAP)\end{tabular}                                                              & \begin{tabular}[c]{@{}c@{}}4,338,213\\ 57,853\end{tabular}              & 1,902    & 3       \\ \hline
Yelp     & 2,614   & \begin{tabular}[c]{@{}c@{}}Business-User-Business(BUB)\\ Business-Rating Level-Business (BLB)\\ Business-Service-Business (BSB)\end{tabular}             & \begin{tabular}[c]{@{}c@{}}528,332\\ 1,487,306\\ 2,477,722\end{tabular} & 82       & 3       \\ \hline
Amazon   & 7,621   & \begin{tabular}[c]{@{}c@{}}Item-AlsoView-Item (IVI)\\ Item-AlsoBought-Item (IBI)\\ Item-BoughtTogether-Item (ITI)\end{tabular}                            & \begin{tabular}[c]{@{}c@{}}266,237\\ 1,104,257\\ 16,305\end{tabular}    & 2,000    & 4       \\ \hline
MAG      & 113,919 & \begin{tabular}[c]{@{}c@{}}Paper-Paper (PP)\\ Paper-Author-Paper (PAP)\end{tabular}                                                                       & \begin{tabular}[c]{@{}c@{}}1,806,596\\ 10,067,799\end{tabular}          & 128      & 4       \\ \hline
\end{tabular}
}
\label{dataset}
\end{table}

\section{Baselines}
In this section, we give brief introductions of the baselines which are not described in the main paper due to the space constraint.
\begin{itemize}
    \item \textbf{HAN} \cite{wang2019heterogeneous}: HAN is a supervised multiview graph learning method that performs multiview fusion through node-level attention and semantic-level attention, and utilizes node labels for training.
    \item \textbf{VGAE} \cite{kipf2016variational}: VGAE is a graph autoencoder that learns node embeddings via variational autoencoders. Both the encoder and the decoder are implemented with the graph convolutional network.
    \item \textbf{DGI} \cite{velivckovic2018deep}: DGI maximizes the mutual information between patch representations and corresponding high-level summaries of graphs which are derived using the graph convolutional network.
    \item \textbf{O2MAC} \cite{fan2020one2multi}: O2MAC initially selects the most informative view as input and optimizes by reconstructing the graph structures of all views, coupled with self-supervised clustering loss.
    \item \textbf{DMGI} \cite{park2020unsupervised}: DMGI is an unsupervised multiplex network embedding method integrating the node embeddings from multiple graphs by minimizing the disagreements among the view-specific node embeddings and utilizing a universal discriminator.
    \item \textbf{MvAGC} \cite{lin2021multi}: MvAGC is a multiview attributed graph clustering method that utilizes a graph filter, the selection of anchor points, and a novel regularizer for high-order neighborhood information exploration.
    \item \textbf{HDMI} \cite{jing2021hdmi}: HDMI is a self-supervised model for learning node embedding on multiplex networks. It designs a joint supervision signal and combines node embedding from different layers by an attention-based fusion module.
    \item \textbf{MCGC} \cite{pan2021multi}: MCGC is a multiview contrastive graph clustering method, which contains graph filtering, graph learning, and graph contrastive components. And it learns a new consensus graph rather than the initial graph.
    \item \textbf{MGDCR} \cite{mo2023multiplex}: MGDCR is a self-supervised multiplex graph representation learning method, which jointly mines the common and private information in the multiplex graph while minimizing the redundant information within node representations using Barlow Twins loss.
    \item \textbf{DuaLGR} \cite{ling2023dual}: DuaLGR is a multiview graph clustering approach aimed at low homophilous graphs, which contains a dual label-guided graph refinement module and a graph encoder module.
    \item \textbf{DMG} \cite{mo2023disentangled}: DMG is an unsupervised multiplex graph representation learning method that disentangles comprehensive and clear common information while capturing more complementary and less noisy private information.
    \item \textbf{BTGF} \cite{qian2023upper}: BTGF is a multi-relational clustering method with a novel graph filter motivated by the theoretical analysis of Barlow Twins, which makes the inner product positive semi-definite to upper bound Barlow Twins.
\end{itemize}

\end{document}